\documentclass[10pt,twocolumn,letterpaper]{article}

\usepackage[pagenumbers]{wacv}

\usepackage{bm}
\usepackage{amsthm}
\usepackage{algorithm}
\usepackage{algpseudocode}
\usepackage{placeins}
\usepackage{tabularx}
\usepackage{multirow}
\usepackage{colortbl}
\usepackage{tikz}
\usetikzlibrary{arrows.meta,positioning}

\newtheorem{theorem}{Theorem}
\newtheorem{proposition}[theorem]{Proposition}

\newcommand{\pguidance}{P-Guidance\xspace}
\newcommand{\method}{Step-PI\xspace}

\definecolor{wacvblue}{rgb}{0.21,0.49,0.74}
\usepackage[breaklinks,colorlinks,allcolors=wacvblue]{hyperref}
\hypersetup{
  pdftitle={Training-Free Inpainting Across Domains with a Frozen Text-to-Image Diffusion Model},
  pdfauthor={Zhenhuan Wang and Fengyi Yuan}
}

\title{Training-Free Inpainting Across Domains\\
with a Frozen Text-to-Image Diffusion Model}

\author{
Zhenhuan Wang$^{1}$ \qquad Fengyi Yuan$^{2}$\\
{\small $^{1}$School of Data Science, The Chinese University of Hong Kong, Shenzhen}\\
{\small $^{2}$School of Science and Engineering, The Chinese University of Hong Kong, Shenzhen}
}

\begin{document}
\maketitle

\begin{abstract}
We show that a frozen generic text-to-image diffusion model can perform
conditional inpainting across three evaluated natural-image domains with one
fixed controller configuration, without inpainting-specific weight training,
dataset-specific weight adaptation, or learned inpainting-specific conditioning
channels.  \method augments known-region projection with boundary–interior latent
feedback, persistent PI state, and a predefined four-field release schedule that
modulates controller signals along the reverse trajectory.  Developed only on
Main35-disjoint CelebA-HQ pilots, the controller transfers unchanged to AFHQ and
Places2.  Across two field-identical comparisons on the same 3,500 cases, adding
persistent state and replacing uniform release with the predefined schedule each
improve all 15 dataset--metric cells; 95\% bootstrap intervals exclude zero for
all five metrics in both comparisons.  In descriptive native-route comparisons,
\method leads LanPaint and PILOT---the closest evaluated training-free baselines
using vanilla SD1.5---on all five equal-dataset macro metrics.  Inpainting-trained
systems retain the absolute metric leads but rely on substantial
inpainting-specific offline optimization. Our method provides a complementary
approach for repurposing a frozen generic text-to-image model for cross-domain
inpainting through test-time latent control.
\end{abstract}

\begin{figure}[t]
  \centering
  \includegraphics[width=\columnwidth]{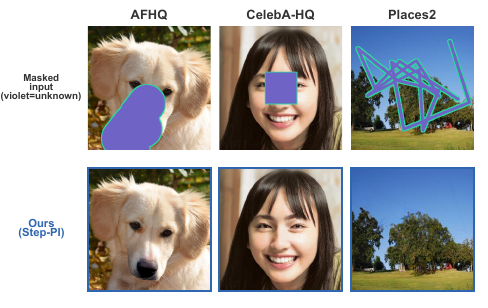}
  \caption{Three-domain conditional inpainting with frozen SD1.5.  The same
  configuration-locked Step-PI controller is applied to an outcome-independent
  random sample spanning AFHQ, CelebA-HQ, and Places2, without weight updates or
  learned inpainting-specific conditioning channels.  Text conditions are fixed before
  inference.  Opaque violet denotes unknown pixels, and the mint contour is
  display-only.}
  \label{fig:cross-domain-teaser}
\end{figure}
\section{Introduction}

Modern diffusion inpainters achieve strong quality through inpainting-specific weights or specialized conditioning interfaces
\citep{rombach2022high,ju2024brushnet,xu2025pixelhacker,liu2024prefpaint}.
Generic text-to-image models lack these inpainting-specific resources.  Nevertheless, we show that a frozen generic text-to-image model can be repurposed for conditional inpainting through test-time latent control, without weight updates or learned inpainting-specific conditioning channels.

Each case supplies an observed image, a binary mask, and a user-provided text
condition fixed before inference.  Known-region projection anchors observed
context on the latent grid, and final RGB compositing restores known pixels
exactly; neither constrains the generated region.  Its boundary requires local
continuity, whereas its interior requires longer-range contextual and structural
coherence, and the useful correction balance changes over the reverse
trajectory.  This motivates boundary–interior feedback, persistent state, and a predefined release schedule.

We introduce \method, a training-free closed-loop controller that operates during deterministic DDIM sampling with a frozen vanilla SD1.5 text-to-image backbone.  Boundary and interior objectives produce current-latent feedback signals, exponentially discounted states retain directional history, and the four fields of a predefined release schedule modulate already-computed controller signals before shared safeguards.  The backbone, objectives, gains, state update, and
release schedule are fixed across all three domains.  Text conditions are
fixed within controller comparisons; a separate matched prompt-presence intervention
changes only the positive text condition.  Field-identical \pguidance $\rightarrow$ PI and PI
$\rightarrow$ \method comparisons isolate persistent state and the complete predefined
release schedule, respectively.

Evidence follows three layers.  First, parameters developed on Main35-disjoint
CelebA-HQ pilots transfer unchanged to AFHQ and Places2, and a fixed
prompt-presence intervention shows responsiveness across all three domains.
Second, the two field-identical additions improve all 15 dataset--metric cells.
A protocol-stratified random-subset audit, conducted without inspecting outputs,
probes conditional schedule
sensitivity within the predefined schedule, while separate five-seed audits test
the persistent-state and joint-release additions across paired stochastic
inference initializations.  Third, under descriptive comparisons using audited native routes, \method records better values than the two closest evaluated vanilla-SD1.5 training-free baselines on all five equal-dataset macro metrics. Inpainting-trained systems retain the absolute metric leads but require substantial offline optimization to acquire inpainting-specific capability. \method avoids this training and any dataset-specific weight adaptation, although it shifts substantial computation to inference.

Our contributions are:
\begin{itemize}
  \item three-domain, configuration-locked evidence that one frozen generic
  text-to-image prior supports conditional inpainting on two transfer-only
  targets without learned inpainting-specific channels;
  \item \method, a PI-structured latent controller that combines
  boundary--interior current feedback, discounted trajectory memory, and
  a predefined release schedule; and
  \item a controlled evaluation that uses matched comparisons to isolate the effects of persistent state and release scheduling, tests robustness across paired inference seeds, and compares \method with external baselines under each method's native inference pipeline.
\end{itemize}

\section{Related Work}
\label{sec:related_work}

\subsection{Trained Inpainting Methods}

Trained inpainters acquire task capability through learned weights or
conditioning interfaces.  SD-Inpaint \citep{rombach2022high} uses an
inpainting-specific latent interface, BrushNet \citep{ju2024brushnet} adds a
decomposed dual-branch architecture, and PixelHacker \citep{xu2025pixelhacker} targets
structural and semantic consistency.  Broader diffusion reward- and
preference-based adaptation includes DDPO \citep{black2023ddpo}, DPOK
\citep{fan2023dpok}, and PrefPaint \citep{liu2024prefpaint}.  Inpainting-specific
training is distinct from dataset-specific weight adaptation, and a model
trained for inpainting need not be retrained for every evaluation domain.  Because these systems
contain learned inpainting capability absent from a generic text-to-image
backbone, we use them as strong quality references rather than matched
interventions.

\subsection{Training-Free Inpainting Methods}

``Training-free'' methods keep deployed weights fixed, but differ in foundation,
sampler, and conditioning interface.  LanPaint \citep{zheng2025lanpaint}
combines Langevin dynamics with ODE-based diffusion, whereas PILOT
\citep{pan2024coherent} optimizes the latent during DDIM sampling; both operate
on vanilla SD1.5 without inpainting-trained weights and are our closest evaluated baselines using the same pretrained diffusion model, although their native routes differ.  Other
approaches modify initialization (InverFill \citep{vu2026inverfill}),
attention/style (HarmonPaint \citep{li2025harmonpaint}), resampling (RePaint
\citep{lugmayr2022repaint}), auxiliary propagation (LatentPaint
\citep{corneanu2024latentpaint}), or pretrained inpainting interfaces (FreeCond
\citep{hsiao2026freecond}).  GradPaint \citep{grechka2024gradpaint} backpropagates
masked MSE and, for pixel-space models, boundary losses; DING
\citep{moufad2026ding} derives VJP-free Gaussian posterior transitions for
zero-shot latent inpainting; and HiGS
\citep{sadat2026higs} uses denoiser-prediction history for generic sampling
enhancement.  Broader test-time control spans posterior gradients, differentiable
guidance, trajectory optimization, calibration, and reward-aware or
stochastic-control formulations
\citep{chung2023dps,bansal2023universal,wallace2023doodl,geyfman2026calibrated,kim2025testtime,song2021score,zhang2022pis,vargas2023dds,pandey2025variational}.

These heterogeneous interfaces prevent component-level causal attribution
across methods.  \method instead constructs normalized task-specific
boundary--interior feedback from the observed context, mask, and current decoded
estimate on generic SD1.5.  Its
field-identical ladder isolates persistent trajectory memory (\pguidance
$\rightarrow$ PI) and the joint effect of a predefined release schedule (PI $\rightarrow$
\method), while LanPaint and PILOT provide same-foundation external positioning.

\section{Preliminaries: Sequential-Control View}
\label{sec:preliminaries}

\subsection{Latent Diffusion and DDIM Sampling}
\label{sec:prelim-ddim}

A latent diffusion model learns to reverse a noising process in a compressed
latent space \citep{ho2020denoising,rombach2022high}.  Under the standard DDPM
parameterization,
\begin{align*}
z_t &= \sqrt{\bar\alpha_t}\,z_0
      +\sqrt{1-\bar\alpha_t}\,\epsilon,\\
\widehat\epsilon_t &= \epsilon_\theta(z_t,t,c),\\
\widehat z_0 &=
\frac{z_t-\sqrt{1-\bar\alpha_t}\,\widehat\epsilon_t}
     {\sqrt{\bar\alpha_t}},\\
z_{t'}^{\mathrm{DDIM}} &=
\sqrt{\bar\alpha_{t'}}\,\widehat z_0+
\sqrt{1-\bar\alpha_{t'}}\,\widehat\epsilon_t,
\qquad t'<t .
\end{align*}
DDIM reuses the pretrained noise predictor to define a non-Markovian reverse
trajectory \citep{song2020denoising}.  We use deterministic DDIM ($\eta=0$),
which adds no per-step noise and makes the field-identical feedback comparisons
controlled and easier to interpret.  We use it as a controlled inference substrate
without claiming superiority over stochastic DDPM.

Stable Diffusion instantiates latent diffusion with a text-conditioned U-Net
operating in VAE latent space \citep{rombach2022high}.  We use the pretrained vanilla Stable Diffusion v1.5 text-to-image model and keep all of its parameters frozen; the fixed text condition $c$ enters the frozen denoising transition. We use neither an inpainting-specific checkpoint nor any learned inpainting-specific conditioning channels.

\subsection{PI-Inspired Stateful Feedback}
\label{sec:prelim-pi-feedback}

Known-region projection anchors observed context on the latent grid, and final
RGB compositing restores visible pixels exactly; neither constrains seam quality
or the generated interior trajectory.  Classical PI combines current and
accumulated error \citep{astrom2008feedback},
\begin{align*}
s_n &= s_{n-1}+e_n,\\
u_n &= K_P e_n+K_I s_n .
\end{align*}
We borrow only this decomposition into current feedback and historical state.  In our finite-horizon, time-varying reverse process, the decoded clean estimate serves as the controller observation. Bounded, normalized boundary/interior directions provide instantaneous feedback; their clipped, exponentially discounted histories form persistent states; and the masked latent correction serves as the control action.  Thus, ``PI-structured'' denotes current
normalized feedback plus revisable directional history, not raw-error
proportionality or exact stationary-error integration.  PI and Step-PI denote the evaluated variants in Sec.~\ref{sec:method}, which instantiate this current-plus-history structure with spatial weighting, normalization, clipping, and uniform or predefined nonuniform release, respectively. The labels describe this structural connection rather than a textbook linear-control formulation.

\subsection{Finite-Horizon Control Lens}
\label{sec:sequential-control-view}

At reverse step $k$, known-context projection precedes the frozen
text-conditioned DDIM transition, which returns a decoded clean estimate and an
uncontrolled next latent.  The clean estimate, image, mask, persistent states,
and predeclared release determine a bounded action, applied at the
post-transition latent port before the next projection.  Thus feedback is formed
from the current observation while actuation occurs after transition; this
execution order neither identifies current-latent feedback with a next-state
objective gradient nor assumes cross-timestep gradient transport.
Section~\ref{sec:method-overview} gives the exact interface.

\section{Methodology}
\label{sec:method}

Building on Section~\ref{sec:preliminaries}, \method repurposes frozen
text-to-image diffusion for an observed RGB image $y$, unknown-region mask $M$
($M=1$ to generate and $M=0$ to preserve), and fixed text condition $c$ through
known-region projection, image- and mask-aware boundary--interior feedback,
persistent state, and a predefined release schedule, without weight updates or
learned inpainting-specific conditioning channels.

\subsection{Problem Setup and Closed-Loop Interface}
\label{sec:method-overview}
\label{sec:method-base}
\label{sec:control-interface}

All internal controls use a frozen vanilla SD1.5 text-to-image backbone with deterministic DDIM sampling.  Let
$M_z$ and $K=1-M$ denote the latent unknown-region and image known-region masks.
At scheduler state $t_k$, the reverse latent $z^k$ runs from initial noisy $z^N$
to terminal $z^0$, with $t_N>\cdots>t_0$.  Boundary/interior memories are
initialized once at zero and enter step $k$ as $\xi_B^{k+1},\xi_I^{k+1}$,
while $\rho_k$ denotes the release vector.  The memory recurrences and release
construction are defined in
Sections~\ref{sec:pi-controller}--\ref{sec:selective-release}.  For
$k=N,\ldots,1$, projection $\Pi_t$, frozen transition
$\mathcal D_{t_k\rightarrow t_{k-1}}$, and controller $\mathcal C_k$ form the
closed-loop interface
\begin{align}
\widetilde z^k &= \Pi_{t_k}(z^k), \nonumber\\
(\widehat z_0^k,z_{\mathrm{base}}^{k-1})
  &= \mathcal D_{t_k\rightarrow t_{k-1}}(\widetilde z^k,c), \nonumber\\
(a_k,\xi_B^k,\xi_I^k)
  &= \mathcal C_k(\widehat z_0^k,y,M,\xi_B^{k+1},\xi_I^{k+1};\rho_k), \nonumber\\
z^{k-1} &= \Pi_{t_{k-1}}(z_{\mathrm{base}}^{k-1}+a_k).
\label{eq:method-overview}
\end{align}
The fixed condition $c$ enters only the frozen transition; $y$ and $M$ define
projection and feedback, and $a_k$ affects only unknown-region coordinates
before the next projection.  Feedback is differentiated at the current
projected latent $\widetilde z^k$; after state update, release, and safeguards,
$a_k$ is applied to post-transition $z_{\mathrm{base}}^{k-1}$ on the same fixed
VAE lattice.  This is causal one-pass actuation, not cross-timestep gradient
transport, $-\nabla_{z^{k-1}}\mathcal L$, or next-state descent.

Known-region projection uses a leakage-free clean reference latent
$z_0^{\mathrm{known}}$ constructed only from visible pixels:
\begin{equation}
\Pi_t(z)=(1-M_z)\odot q_t^{\mathrm{known}}+M_z\odot z.
\label{eq:known-projection}
\end{equation}
Here $\odot$ denotes elementwise multiplication.  Equation~\eqref{eq:known-projection}
anchors known coordinates of the resized VAE latent grid, but VAE decoding is
not pixelwise; exact known-region RGB recovery therefore comes from final
compositing, not latent projection alone.  Reference construction, CFG prediction, DDIM transitions, VAE feedback decoding, mask resizing, and exact inference settings are provided in Supplementary Appendices~A and~B.

\begin{figure}[t]
  \centering
  \resizebox{0.98\columnwidth}{!}{%
  \begin{tikzpicture}[
    state/.style={font=\small, inner sep=1pt},
    action/.style={font=\small, inner sep=0pt, outer sep=0.25pt},
    op/.style={draw, rounded corners=1pt, font=\small, inner sep=2.5pt},
    add/.style={circle, draw, font=\small, inner sep=0pt, minimum size=4mm},
    arrow/.style={-{Stealth[length=1.7mm,width=1.1mm]}, line width=0.35pt}
  ]
    \node[state] (zk) {$z^k$};
    \node[op, right=3mm of zk] (p0) {$\Pi_{t_k}$};
    \node[state, right=3mm of p0] (zt) {$\widetilde z^k$};
    \node[op, right=3mm of zt] (d) {$\mathcal D_{t_k\to t_{k-1}}$};
    \node[state, above right=2mm and 5mm of d] (base) {$z_{\rm base}^{k-1}$};
    \node[state, below right=2mm and 5mm of d] (clean) {$\widehat z_0^k$};
    \node[op, right=5mm of clean] (c) {$\mathcal C_k(\rho_k)$};
    \node[add, right=16mm of base] (sum) {$+$};
    \node[op, right=14mm of sum] (p1) {$\Pi_{t_{k-1}}$};
    \node[state, right=3mm of p1] (next) {$z^{k-1}$};
    \draw[arrow] (zk)--(p0); \draw[arrow] (p0)--(zt);
    \draw[arrow] (zt)--(d);
    \draw[arrow] (d.east)--++(1.5mm,0)|-(base.west);
    \draw[arrow] (d.east)--++(1.5mm,0)|-(clean.west);
    \draw[arrow] (base)--(sum); \draw[arrow] (clean)--(c);
    \path (c.east -| sum.south) coordinate (actioncorner);
    \node[action] (actionlabel) at (actioncorner) {$a_k$};
    \draw[line width=0.35pt] (c.east)--(actionlabel.west);
    \draw[arrow] (actionlabel.north)--(sum.south);
    \draw[arrow] (sum)--(p1); \draw[arrow] (p1)--(next);
  \end{tikzpicture}%
  }
  \caption{Closed-loop frozen text-to-image inpainting interface.  Text enters
  the frozen transition, while the image and mask determine projection and
  feedback.  Step-PI and PI share this interface and differ only in $\rho_k$.
  Arrows mark computation and actuation locations, not gradient transport
  through the DDIM transition.}
  \label{fig:control-interface}
\end{figure}

\subsection{Boundary--Interior Feedback}
\label{sec:control-signals}

From the visible context and current clean estimate, the controller separates
local boundary stitching from broader interior continuation.  Let
$\mathcal L_B,\mathcal L_I$ be the corresponding scalar image-space objectives
and $g_B^k,g_I^k$ their bounded, normalized, objective-derived feedback
directions, differentiated at the current projected latent $\widetilde z^k$;
every controlled variant computes both objectives and directions at each step:
\begin{equation}
\widetilde z^k\rightarrow\widehat z_0^k\rightarrow\widehat x_0^k
\rightarrow
\begin{Bmatrix}\mathcal L_B\\\mathcal L_I\end{Bmatrix}
\rightarrow
\begin{Bmatrix}g_B^k\\g_I^k\end{Bmatrix}.
\label{eq:feedback-pipeline}
\end{equation}

With fixed weights $\lambda_j^B$, the boundary objective is
\begin{equation}
\mathcal L_B=
\lambda_1^B\mathcal L_{\mathrm{known}}+
\lambda_2^B\mathcal L_{\mathrm{pair}}+
\lambda_3^B\mathcal L_{\mathrm{TV}}+
\lambda_4^B\mathcal L_{\mathrm{boundary\text{-}grad}},
\label{eq:boundary-objective}
\end{equation}
Its four terms combine known-context RGB fidelity, seam pairing,
support-masked image variation, and RGB-gradient matching.  With fixed weights
$\lambda_j^I$, the interior objective is
\begin{equation}
\mathcal L_I=
\lambda_1^I\mathcal L_{\mathrm{lowfreq}}+
\lambda_2^I\mathcal L_{\mathrm{interior}}+
\lambda_3^I\mathcal L_{\mathrm{ring}}+
\lambda_4^I\mathcal L_{\mathrm{frequency}}.
\label{eq:interior-objective}
\end{equation}
Its terms encourage multiscale, depth-weighted contextual and structural
coherence.  Semantics come from the frozen text-conditioned transition; the
controller regularizes spatial and contextual consistency rather than directly
optimizing text alignment.  Exact definitions, conventions, and shared weights
are provided in Supplementary Appendix~A.

Here $D_z$ increases with depth in the latent mask and
$\operatorname{Normalize}_2$ is stabilized full-tensor $\ell_2$
normalization; resizing and stabilization details are provided in Supplementary Appendix~A:
\begin{align}
g_B^k&=\operatorname{Normalize}_2
[-M_z\odot\nabla_{\widetilde z^k}\mathcal L_B],\nonumber\\
g_I^k&=\operatorname{Normalize}_2
[-D_z\odot\nabla_{\widetilde z^k}\mathcal L_I].
\label{eq:normalized-gradients}
\end{align}
Normalization removes raw-scale differences across objectives and reverse
steps but does not make the directions timestep-invariant; fixed gains, release
factors, norm caps, and the final clamp set action magnitude.  It therefore
provides fixed calibration, not raw-magnitude adaptivity.

\subsection{Persistent PI-Structured State}
\label{sec:pi-controller}

Following Sec.~\ref{sec:prelim-pi-feedback}, $g_j^k$, $j\in\{B,I\}$, is the
current (P) branch and $\xi_j^k$ its discounted-history (I) branch.  PI and
Step-PI carry separate zero-initialized boundary/interior states through the
reverse trajectory.

To separate seam correction from interior continuation, we split $M_z$ into an
inner-boundary band $S_z$ and deep interior $I_z$.  $W_B$ emphasizes $S_z$
while retaining weaker $I_z$ feedback, and $\beta_B$ gives distinct retention
on boundary and deep-interior support.  Interior retention is uniform
($\gamma_I$) because $g_I^k$ is already depth-weighted by $D_z$.  Let
$(m_B,w_B)=(\beta_B,W_B)$ and $(m_I,w_I)=(\gamma_I,1)$.  Define
$\mathcal P_B=\operatorname{ClipNorm}_{\nu_B}$ as the full-tensor projection
onto the radius-$\nu_B$ $\ell_2$ ball.  For the effective recurrence, set
$\mathcal P_I(v)=v$ because Appendix~A shows algebraically that the implemented
interior radius-one norm guard acts as the identity under the evaluated
configuration.  The effective recurrence is
\begin{equation}
\xi_j^k=\mathcal P_j\!\left[
m_j\odot\xi_j^{k+1}+(1-m_j)\odot(w_j\odot g_j^k)\right].
\label{eq:pi-memory}
\end{equation}
\begin{proposition}[Effective persistent-state characterization]
\label{prop:production-state-characterization}
Let $d_j^k=w_j\odot g_j^k$,
$\mathcal C_B=\{\xi:\lVert\xi\rVert_2\le\nu_B\}$, and
$\mathcal C_I=\mathbb R^{4\times64\times64}$.  For elementwise
$0\le m_j\le1$, the update in Eq.~\eqref{eq:pi-memory} is the unique minimizer
\begin{equation}
\underset{\xi\in\mathcal C_j}{\operatorname{argmin}}\quad
\frac{1}{2}\lVert\sqrt{m_j}\odot(\xi-\xi_j^{k+1})\rVert_2^2+
\frac{1}{2}\lVert\sqrt{1-m_j}\odot(\xi-d_j^k)\rVert_2^2.
\label{eq:production-state-characterization}
\end{equation}
If $d_j^k\in\mathcal C_j$, it also obeys
$\lVert\xi_j^k-d_j^k\rVert_2\le
\lVert m_j\rVert_\infty\lVert\xi_j^{k+1}-d_j^k\rVert_2$.
\end{proposition}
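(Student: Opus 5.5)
The plan is to show that the weighted objective in Eq.~\eqref{eq:production-state-characterization} equals an unweighted squared distance to the pre-projection convex combination plus a constant. Everything then follows from standard facts about Euclidean projection onto closed convex sets.

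\textbf{Completing the square.} Write $c_j^k=m_j\odot\xi_j^{k+1}+(1-m_j)\odot d_j^k$, the argument of $\mathcal P_j$ in Eq.~\eqref{eq:pi-memory}. Fix one coordinate with weight $m\in[0,1]$ and values $a=(\xi_j^{k+1})_i$, $b=(d_j^k)_i$. Then
\[
m(\xi-a)^2+(1-m)(\xi-b)^2=\xi^2-2\xi\bigl(ma+(1-m)b\bigr)+ma^2+(1-m)b^2 .
\]
The quadratic coefficient is $m+(1-m)=1$ for every coordinate; this is where $0\le m_j\le 1$ elementwise is used. Summing over coordinates, the objective equals $\tfrac12\lVert\xi-c_j^k\rVert_2^2+\mathrm{const}$, where the constant does not depend on $\xi$. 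The spatial weighting $w_j$ is absorbed into $d_j^k$ and plays no further role.

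\textbf{Identifying the minimizer.} The reduced objective is $1$-strongly convex, and $\mathcal C_j$ is nonempty, closed and convex. Hence the objective has a unique minimizer on $\mathcal C_j$, namely the Euclidean projection $\mathrm{proj}_{\mathcal C_j}(c_j^k)$. There are two cases:
\begin{itemize}
\item For $j=I$, $\mathcal C_I$ is the whole space, so the minimizer is $c_j^k$ itself. This matches $\mathcal P_I=\mathrm{id}$.
\item For $j=B$, the projection onto the radius-$\nu_B$ ball is $v\mapsto v\min\{1,\nu_B/\lVert v\rVert_2\}$. This is exactly $\operatorname{ClipNorm}_{\nu_B}$ as defined in the text, the full-tensor projection onto that ball.
\end{itemize}
So in both cases the minimizer is $\mathcal P_j(c_j^k)=\xi_j^k$, which proves the characterization.

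\textbf{Contraction bound.} Assume $d_j^k\in\mathcal C_j$. Then $\mathcal P_j(d_j^k)=d_j^k$. Projections onto closed convex sets are nonexpansive, so
\[
\lVert\xi_j^k-d_j^k\rVert_2=\lVert\mathcal P_j(c_j^k)-\mathcal P_j(d_j^k)\rVert_2\le\lVert c_j^k-d_j^k\rVert_2=\lVert m_j\odot(\xi_j^{k+1}-d_j^k)\rVert_2 .
\]
Bounding the elementwise product coordinatewise gives the final factor $\lVert m_j\rVert_\infty$.

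\textbf{Main obstacle.} I expect no step to be difficult. The only point needing care is matching the operator $\operatorname{ClipNorm}_{\nu_B}$ to the exact Euclidean projection, i.e.\ that it is the full-tensor radial rescaling and not a per-channel clip. The statement already defines it as the full-tensor ball projection, so I would take that definition as given and check the closed form against the first-order optimality condition of the ball projection.
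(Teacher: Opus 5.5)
Your proposal is correct and matches the paper's proof essentially step for step. The paper notes that the weights sum to one, so $\nabla Q_j^k(\xi)=\xi-\zeta_j^k$ with $\zeta_j^k$ your $c_j^k$; this is your completed square stated through the gradient. It then identifies the constrained minimizer as the Euclidean projection $\mathcal P_j(\zeta_j^k)$, and gets the bound from nonexpansiveness of $\mathcal P_j$ together with $\mathcal P_j(d_j^k)=d_j^k$.

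One small wording point: $m+(1-m)=1$ holds for every real $m$. The hypothesis $0\le m_j\le 1$ is really what makes $\sqrt{m_j}$ and $\sqrt{1-m_j}$ real, so that the weighted norms expand as you wrote.
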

Under the stated bounds, Proposition~\ref{prop:production-state-characterization}
is a per-step characterization of the implemented bounded compromise between
retained and current directions.  Since $d_j^k$ changes with $k$, it implies
neither trajectory-level contraction nor stability, and it does not establish
task benefit.  PI and Step-PI share this recurrence and differ only in the release schedule.
Matched \pguidance differs from PI only by setting $\xi_B^k=\xi_I^k=0$, so
P-Guidance$\rightarrow$PI is the strict persistent-state-only comparison and
the field-identical empirical test of persistent-state value.  Supplementary
Appendix~A gives the proof, spatial mappings, constants, and
trajectory-state details.

\subsection{Predefined Release Schedule and Matched Controls}
\label{sec:selective-release}
\label{sec:algorithm}
\label{sec:method-lineage}

Step-PI uses one predefined four-field release schedule: boundary-integral
$r_B$ and final-interior $r_I$ depend on $\bar\alpha_{t_k}$, while common-interior
$q_k$ and additional-memory $h_k$ depend on reverse-step progress $p_k$.
On the fixed 50-step DDIM route, both deterministically and monotonically index
the same trajectory, not independent state observations.  The fields are
predefined---not learned or uncertainty-derived---and modulate signals as
$\rho_k=(\rho_{B,k},\rho_{P,k},\rho_{H,k},\rho_{O,k})$; exact functions are provided in Supplementary Appendix~A.  For $j\in\{B,I\}$, the shared proportional, integral, and fused
actions are
\begin{equation}
\begin{aligned}
(\alpha_{P,k}^B,\alpha_{I,k}^B,\alpha_{O,k}^B)
  &=(1,\rho_{B,k},1),\\
(\alpha_{P,k}^I,\alpha_{I,k}^I,\alpha_{O,k}^I)
  &=(\rho_{P,k},\rho_{P,k}\rho_{H,k},\rho_{O,k}),\\
P_j^k&=\alpha_{P,k}^jK_P^jg_j^k,\\
I_j^k&=\operatorname{CapNorm}\!\left(
\alpha_{I,k}^jK_I^j\xi_j^k,c_j\lVert P_j^k\rVert_2\right),\\
u_j^k&=\alpha_{O,k}^j(P_j^k+I_j^k),\\
a_k(\rho_k)&=\operatorname{Clamp}_{[-a_{\max},a_{\max}]}
\!\left[M_z\odot(u_B^k+u_I^k)\right],\\
\rho_k^{\mathrm{PI}}&=(1,1,1,1),\\
\rho_k^{\mathrm{Step\text{-}PI}}
  &=(r_B(t_k),q_k,h_k,r_I(t_k)).
\end{aligned}
\label{eq:release-vector}
\end{equation}
Here $\operatorname{CapNorm}(v,r)=
v\min\{1,r/(\lVert v\rVert_2+10^{-8})\}$ is applied per sample over the full
latent tensor; the final clamp bounds the masked action.  PI sets all release
fields to one.  In the interior branch, $q_k$ precedes CapNorm and $r_I(t_k)$
follows it, yielding approximate pre-safeguard scale $q_kr_I(t_k)$ up to the
stabilizer and downstream safeguards; hence the four stored fields are not four
independent effective degrees of freedom.  Their timing roles are design
rationales, not optimality claims.  Release affects actuation before shared
safeguards, not the per-step feedback and state updates; final action need not
scale linearly, and history remains revisable (Fig.~\ref{fig:release-policies}).

The complete predefined release schedule was selected using Main35-disjoint
CelebA-HQ pilots and frozen before evaluation.  Section~\ref{sec:experiments}
evaluates the frozen schedule and reports a one-field-uniformization audit of
conditional sensitivity, without claims of minimality, individual or global
optimality, or dominance over shared-scalar or alternative schedules.

\begin{figure*}[t]
  \centering
  \begin{minipage}[t]{0.65\textwidth}
    \vspace{0pt}
    \includegraphics[width=\linewidth,trim=0 78bp 168bp 0,clip]{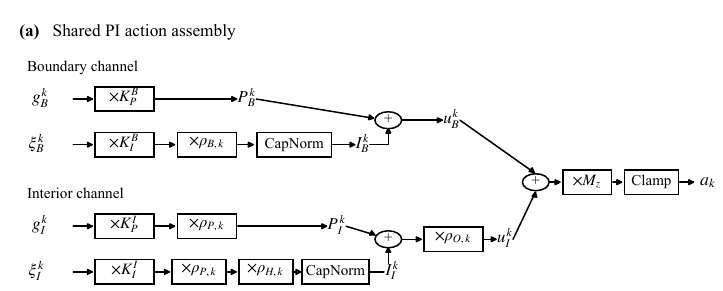}
    \vspace{-0.4ex}

    {\scriptsize Feedback and states update every step; $\rho_k$ modulates component inputs before shared safeguards.\par}
  \end{minipage}\hfill%
  \begin{minipage}[t]{0.33\textwidth}
    \vspace{0pt}
    \includegraphics[width=\linewidth]{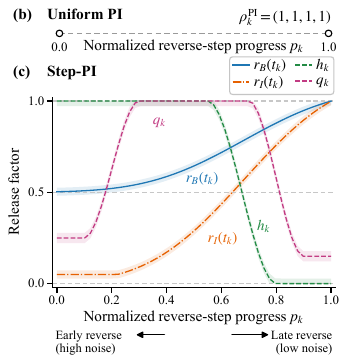}
  \end{minipage}
  \caption{Stateful boundary--interior action and release.  (a) Both feedback
  branches and persistent states update at every step; release modulates their
  contributions before the shared nonlinear safeguards.  (b) PI uses uniform
  release.  (c) Step-PI uses the four predefined fields of its release schedule
  along the reverse trajectory.}
  \label{fig:release-policies}
\end{figure*}

Algorithm~\ref{alg:step-pi} instantiates the matched construction ladder with
release policy $R$: DDIM-Proj$\rightarrow$P-Guidance adds boundary--interior
feedback, P-Guidance$\rightarrow$PI adds persistent state, and
PI$\rightarrow$Step-PI changes only $R$, from \textsc{UniformRelease} to
\textsc{StepRelease}.

\begin{algorithm}[t]
\caption{Field-identical PI/Step-PI control for one trajectory.}
\label{alg:step-pi}
\begin{algorithmic}[1]
\Require Image $y$, mask $M$, condition $c$, frozen diffusion sampler,
release policy $R$, controller hyperparameters $\Theta$
\Ensure Inpainted image with exact known-pixel preservation
\State Construct $M_z$, $z_0^{\mathrm{known}}$, and fixed known noise
$\epsilon^{\mathrm{known}}$
\State Order scheduler states as $t_N>t_{N-1}>\cdots>t_0$
\State Initialize $z^N$ from the paired seed; $\xi_B^{N+1}\gets0$,
$\xi_I^{N+1}\gets0$
\For{$k=N,\ldots,1$}
  \State $\widetilde z^k\gets\Pi_{t_k}(z^k)$
  \State $\widehat\epsilon^k\gets\Call{FrozenUNetCFG}{\widetilde z^k,t_k,c}$
  \State $(z_{\mathrm{base}}^{k-1},\widehat z_0^k)
  \gets\Call{DDIMStep}{\widehat\epsilon^k,\widetilde z^k,
  t_k\!\rightarrow t_{k-1}}$
  \State $\widehat x_0^k\gets\Call{VAEDecodeFP32}{\widehat z_0^k/s_{\mathrm{VAE}}}$
  \State Compute $\mathcal L_B,\mathcal L_I$ and current-latent feedback
  signals $g_B^k,g_I^k$
  \State Update $\xi_B^k,\xi_I^k$ from $\xi_B^{k+1},\xi_I^{k+1}$ by
  Eq.~\ref{eq:pi-memory}
  \State $\rho_k\gets R(t_k,k,N)$
  \State Construct $u_B^k,u_I^k$ by Eq.~\ref{eq:release-vector}
  \State $a_k\gets a_k(\rho_k)$ by Eq.~\ref{eq:release-vector}
  \State $z^{k-1}\gets\Pi_{t_{k-1}}(z_{\mathrm{base}}^{k-1}+a_k)$
\EndFor
\State Decode $z^0$ and composite exactly with $y$ on $(1-M)$
\State \Return composited RGB image
\end{algorithmic}
\end{algorithm}

\section{Experiments}
\label{sec:experiments}

We evaluate training-free Step-PI with a frozen generic SD1.5 text-to-image
(T2I) backbone on AFHQ, CelebA-HQ, and Places2, covering configuration-locked
cross-domain transfer without retuning, controlled ablations of controller
components, comparisons with external baselines, and inference cost.

\subsection{Evaluation and Transfer Protocol}
\label{sec:main35-contract}

\paragraph{Evaluation set and frozen inputs.}
Main35 comprises 3,500 paired cases from AFHQ v2
\citep{choi2020starganv2} (1,300), CelebA-HQ
\citep{lee2020maskgan} (1,400), and the Places365 Standard validation split
\citep{zhou2018places} (800; denoted \emph{Places2}), across 35 fixed 100-case
mask protocols.  Image, mask, prompt, and seed are held fixed within each
comparison; unknown-region ground truth is evaluation-only.  Prompts are fixed
without output-based optimization; preprocessing and sampling details are
provided in Supplementary Appendix~B.

\paragraph{Internal and transfer contracts.}
DDIM-Proj, P-Guidance, PI, and Step-PI share the frozen generic SD1.5
backbone \citep{rombach2022high}, projected image--mask interface, and paired
inputs.  Controller gains and release schedules were developed only on
Main35-disjoint CelebA-HQ pilots and then frozen; AFHQ and Places2 are
transfer-only targets without retuning, so the claim is limited to these two
domains.

\paragraph{External baselines.}
LanPaint \citep{zheng2025lanpaint} and PILOT \citep{pan2024coherent} are the
evaluated same-foundation training-free references: all three use vanilla
SD1.5 without inpainting-trained weights, learned inpainting interfaces, or
dataset-specific weight adaptation. GradPaint \citep{grechka2024gradpaint}
does not report an SD1.5 checkpoint, DING \citep{moufad2026ding} uses SD3.5,
and HiGS \citep{sadat2026higs} does not evaluate inpainting. Accordingly, we do not include these three methods
in our quantitative external comparison. SD-Inpaint \citep{rombach2022high}, BrushNet
\citep{ju2024brushnet}, and PixelHacker \citep{xu2025pixelhacker} form the
trained lane; PixelHacker is Places2-adapted. Native-interface comparisons are
descriptive; Supplementary Appendix~B audits attributes.

\paragraph{Metrics and statistics.}
Masked L1 measures held-out unknown-region RGB reconstruction, while
composite-based Masked LPIPS applies LPIPS-Alex \citep{zhang2018unreasonable}
to the full-frame output after exact known-region compositing.  Boundary
L1 and Boundary LPIPS measure seam fidelity, and CLIP-Q is an auxiliary CLIP-IQA-style
frozen two-prompt no-reference quality proxy
\citep{radford2021clip,wang2023exploringclip}; lower is better except for
CLIP-Q.  The controller neither sees unknown-region ground truth nor
optimizes LPIPS or CLIP-Q; its context and seam objectives align most directly
with L1 continuity.  These proxies do not replace human evaluation, and
reference scores may penalize plausible alternatives.  We average within
protocols and weight protocols and datasets equally.  Case-within-protocol
intervals are conditional on the fixed 35 protocols, not unseen mask families.
Full definitions and statistics are provided in Supplementary Appendix~B.

\begin{table*}[!t]
\centering
\small
\setlength{\tabcolsep}{7pt}
\renewcommand{\arraystretch}{1.04}
\begin{tabular}{@{}llccccc@{}}
\toprule
\multicolumn{7}{l}{\textbf{(a) Absolute Main35 results}} \\
\addlinespace[2pt]
Dataset & Method & Masked L1 $\downarrow$ & Boundary L1 $\downarrow$ & Masked LPIPS $\downarrow$ & Boundary LPIPS $\downarrow$ & CLIP-Q $\uparrow$ \\
\midrule
\multirow{4}{*}{AFHQ}
 & DDIM-Proj  & 0.1916 & 0.0442 & 0.1486 & 0.1177 & 0.6899 \\
 & P-Guidance & 0.1801 & 0.0413 & 0.1448 & 0.1112 & 0.6969 \\
 & PI         & 0.1768 & 0.0408 & 0.1441 & 0.1096 & 0.6990 \\
 & \textbf{Step-PI} & \textbf{0.1694} & \textbf{0.0320} & \textbf{0.1379} & \textbf{0.0980} & \textbf{0.7109} \\
\addlinespace[2pt]
\multirow{4}{*}{CelebA-HQ}
 & DDIM-Proj  & 0.1611 & 0.0376 & 0.0770 & 0.1209 & 0.5117 \\
 & P-Guidance & 0.1516 & 0.0348 & 0.0748 & 0.1143 & 0.5154 \\
 & PI         & 0.1449 & 0.0316 & 0.0727 & 0.1086 & 0.5219 \\
 & \textbf{Step-PI} & \textbf{0.1278} & \textbf{0.0236} & \textbf{0.0690} & \textbf{0.0923} & \textbf{0.5719} \\
\addlinespace[2pt]
\multirow{4}{*}{Places2}
 & DDIM-Proj  & 0.1843 & 0.0472 & 0.1398 & 0.1441 & 0.4558 \\
 & P-Guidance & 0.1752 & 0.0438 & 0.1370 & 0.1371 & 0.4581 \\
 & PI         & 0.1738 & 0.0431 & 0.1366 & 0.1351 & 0.4652 \\
 & \textbf{Step-PI} & \textbf{0.1606} & \textbf{0.0317} & \textbf{0.1300} & \textbf{0.1184} & \textbf{0.4763} \\
\midrule
\multicolumn{7}{l}{\textbf{(b) Mean direction-normalized gains for matched mechanism additions}} \\
\addlinespace[2pt]
Comparison & Cells won & Masked L1 & Boundary L1 & Masked LPIPS & Boundary LPIPS & CLIP-Q \\
\midrule
PI vs. P-Guidance   & 15/15 & +2.4\% & +3.9\% & +1.2\% & +2.6\% & +1.0\% \\
Step-PI vs. PI      & 15/15 & +7.8\% & +24.4\% & +4.7\% & +12.6\% & +4.6\% \\
\bottomrule
\end{tabular}
\caption{Main35 internal results.  (a) Equal-protocol aggregates; Step-PI
entries are shown in bold.  (b) Equal-dataset mean direction-normalized gains
for the field-identical persistent-state-only and release-only additions;
positive values favor the first named method, and ``Cells won'' counts
favorable dataset--metric cells.}
\label{tab:main35-internal-absolute}
\label{tab:main35-step-pi-relative-gain}
\end{table*}

\begin{figure*}[t]
  \centering
  \includegraphics[width=\textwidth]{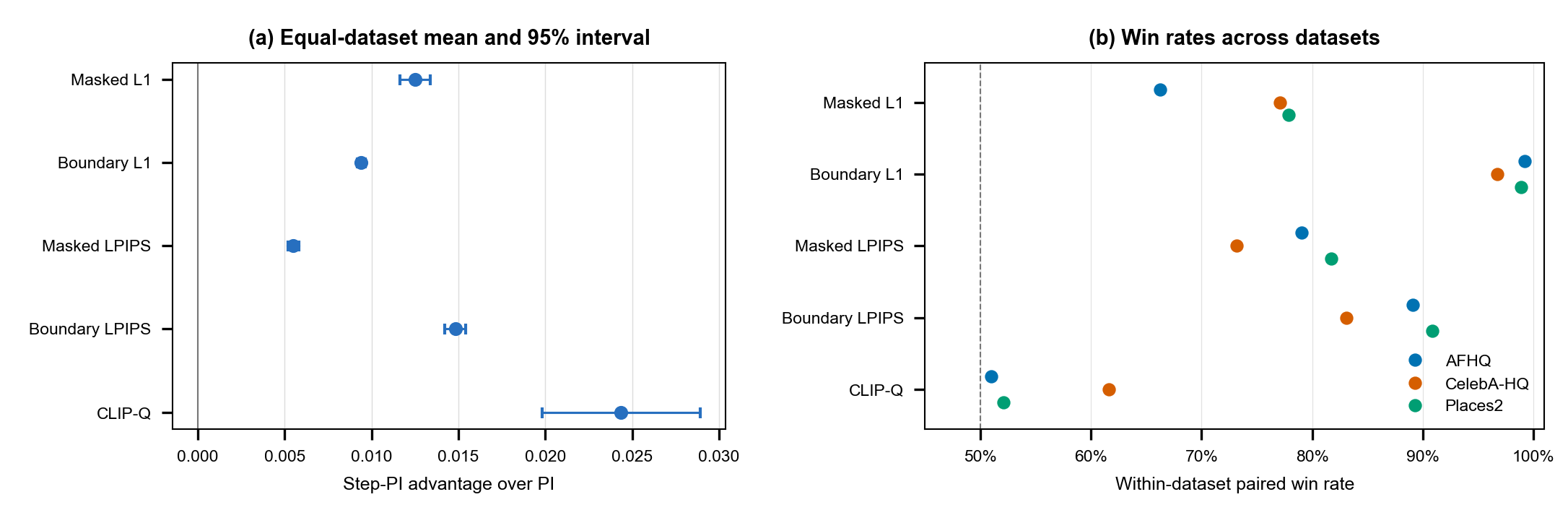}
  \caption{Paired Step-PI advantages over PI on Main35: (a)
  direction-normalized equal-dataset paired means with 95\%
  case-within-protocol bootstrap intervals; (b)
  within-dataset paired win rates across the same five metrics.  Positive
  values favor Step-PI; the dashed line marks 50\%; all 3,500 cases contribute.}
  \label{fig:native-paired-effects}
\end{figure*}

\subsection{Three-Domain Frozen-Backbone Inpainting}
\label{sec:main35-ladder}

Among the four matched variants, Step-PI is best in all 15 dataset--metric
cells across AFHQ, CelebA-HQ, and Places2, with performance improving
monotonically along
DDIM-Proj$\rightarrow$P-Guidance$\rightarrow$PI$\rightarrow$Step-PI
(Table~\ref{tab:main35-internal-absolute}).

\label{sec:configuration-locked-transfer}
These results cover all three datasets; AFHQ and Places2 are the two
transfer-only targets evaluated without retuning.

On an outcome-blind, protocol-stratified random subset of 175 cases, the
fixed Prompt-CLIP contrast
favors Correct over Empty, with the 95\% CI excluding zero.  This paired
intervention demonstrates that the complete frozen pipeline responds to prompt
presence under otherwise fixed inference conditions.  Complete endpoint
definitions, three-domain estimates, and strict-random visual examples are
provided in Supplementary Appendices~C and~D.

Figure~\ref{fig:cross-domain-teaser} shows fixed-configuration completions
reused from an outcome-independent random sample across all three datasets; the
full random construction-ladder grid is provided in Supplementary Appendix~D.

\subsection{Matched Mechanism Evidence}
\label{sec:main35-results}

Matched P-Guidance$\rightarrow$PI changes only cross-step persistent state,
whereas PI$\rightarrow$Step-PI changes only the complete predefined release
schedule.  With objectives, weights, and paired inputs fixed and unknown-region
ground truth unavailable, they isolate state and schedule rather than loss
selection or retuning; LPIPS and CLIP-Q remain evaluation-only.  The same
subset probes conditional one-field sensitivity.

PI improves P-Guidance in all 15 Main35 aggregates; all five stratified-bootstrap
intervals exclude zero and all five pooled per-case win rates exceed 50\%.
Step-PI likewise improves PI in all 15 aggregates, with all five intervals excluding zero.
Figure~\ref{fig:native-paired-effects} summarizes its five-metric release
effect; complete paired and win-rate tables are provided in Supplementary Appendix~C.

On the same 175-case subset, uniformizing any field worsens all five metrics.
Because the other three remain and $q_k$ and $r_I$ are serial scalings, this
supports conditional sensitivity within the implemented schedule---not
independent effective degrees of freedom, necessity, minimality, optimality, or
superiority to lower-dimensional or shared alternatives; details are provided in Supplementary Appendix~C.

\paragraph{Mechanism robustness across inference seeds.}
\label{sec:capability-results}
Across five paired inference-seed sweeps over the same outcome-blind, balanced
175-case stratified random subset spanning all 35 protocols (875 pairs), PI
outperforms P-Guidance on all five fixed evaluation metrics when persistent state is the
only component changed.  Across all five metrics, the hierarchical five-seed
effects consistently favor PI, with every 95\% CI excluding zero and every
Holm-corrected test remaining significant; full estimates are provided in
Supplementary Appendix~C.  On the same cohort, Step-PI outperforms PI on every
metric in every sweep when the release schedule is the only component changed;
pooled 95\% CIs exclude zero for all five metrics.  Full estimates, protocol
details, and reproducibility scope are provided in Supplementary Appendices~C
and~E.

\subsection{External Positioning and Computational Cost}
\label{sec:external-positioning}

\begin{table*}[t]
\centering
\small
\setlength{\tabcolsep}{0.41em}
\renewcommand{\arraystretch}{1.08}
\begin{tabular}{@{}llccccc@{}}
\toprule
Training type & Method & Masked L1 $\downarrow$ & Boundary L1 $\downarrow$ & Masked LPIPS $\downarrow$ & Boundary LPIPS $\downarrow$ & CLIP-Q $\uparrow$ \\
\midrule
\multirow{3}{*}{\shortstack[l]{\textit{Training-free on}\\\textit{vanilla SD1.5}}} & Step-PI & \textbf{0.1526} & \textbf{0.0291} & \textbf{0.1123} & \textbf{0.1029} & \textbf{0.5864} \\
 & LanPaint & 0.1608 & 0.0438 & 0.1323 & 0.1464 & 0.5495 \\
 & PILOT & 0.1596 & 0.0433 & 0.1238 & 0.1328 & 0.5684 \\
\midrule
\multirow{3}{*}{\shortstack[l]{\textit{Inpainting-trained}\\\textit{references}}} & SD-Inpaint & \underline{0.1198} & 0.0233 & \underline{0.0881} & 0.0733 & \underline{0.6508} \\
 & BrushNet & 0.1371 & 0.0292 & 0.1038 & 0.1010 & 0.6391 \\
 & PixelHacker & 0.1458 & \underline{0.0209} & 0.0993 & \underline{0.0713} & 0.6213 \\
\bottomrule
\end{tabular}
\caption{External positioning on Main35 by resource regime.  Bold marks the best result among training-free methods using vanilla SD1.5, and underlining marks the best overall result.  PixelHacker is additionally adapted to Places2; native routes differ, so the values are neither intervention-matched nor compute-normalized.}
\label{tab:external-equal-dataset-macro}
\end{table*}

\begin{figure*}[t]
  \centering
  \includegraphics[width=\textwidth]{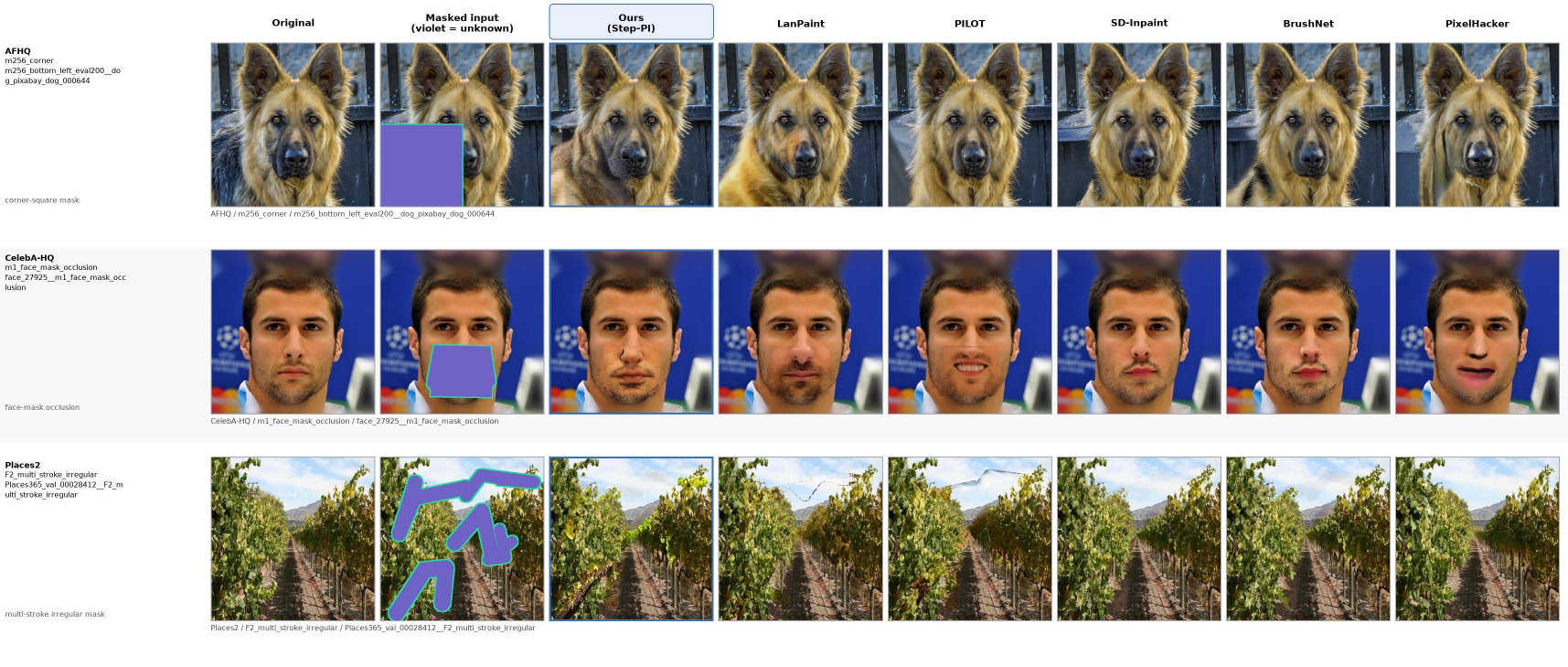}
  \caption{Native-route external comparison on three randomly drawn cases
  shared with the supplementary internal grid.  The draw did not depend on
  method outputs; all methods receive the same image/mask, and LanPaint and
  PILOT inherit the cases without reselection.}
  \label{fig:external-baseline-grid}
\end{figure*}

\paragraph{Training-free external positioning.}
LanPaint and PILOT, the closest evaluated vanilla-SD1.5 training-free
references, were not retuned on Main35: LanPaint uses its official v1.5.5
ComfyUI default configuration fixed before Main35, whereas PILOT uses a fixed
DDIM sampling configuration with periodic latent optimization.  Under these
audited native routes, Step-PI records better values on all
five equal-dataset macro metrics and on 13/15 and 12/15 cells, respectively
(Table~\ref{tab:external-equal-dataset-macro}).  This is a descriptive
comparison among training-free methods built on the same pretrained SD1.5
foundation. Because the methods use their native inference procedures and are
not matched for computational budget, the results should not be interpreted as
isolating algorithmic effects or establishing a comprehensive SOTA ranking;
full per-dataset results and an 18-case random atlas are provided in
Supplementary Appendices~D and~E.

\paragraph{Comparison with inpainting-trained systems.}
The inpainting-trained references encode substantial offline specialization:
SD-Inpaint uses 440k inpainting steps at $512\!\times\!512$
\citep{stableDiffusionInpaintingModelCard}; BrushNet reports 430k steps on eight
V100 GPUs (about three days) \citep{ju2024brushnet}; and PixelHacker uses 200k
iterations on 14M image--mask pairs plus a 120k-step Places2 fine-tune on 1.8M
images, both with 12 L40S GPUs \citep{xu2025pixelhacker}.  Step-PI instead
freezes vanilla SD1.5 and performs no inpainting- or dataset-specific weight
training.  The trained lane nevertheless retains higher absolute
scores---SD-Inpaint leads Masked L1, Masked LPIPS, and CLIP-Q, while
PixelHacker leads both boundary metrics---but obtains them with offline
optimization absent from Step-PI.  Even without such weight training, Step-PI
is numerically close to trained-reference results on some metrics
(Table~\ref{tab:external-equal-dataset-macro}).  Because the published budgets
differ in data, hardware, and batch size, they provide resource context rather
than a normalized quality--cost comparison.

\paragraph{Computational cost.}
A 35-case A100 audit measures Step-PI at 30.48~s/image and 15.55~GiB: no resolved
runtime difference from PI, 9.94$\times$ DDIM-Proj, and slower than PILOT/LanPaint
under descriptive native routes. Two feedback gradients per step without parameter updates trade
offline specialization for costly online control---not inference efficiency or
quality--compute superiority; full details are provided in Supplementary Appendix~E.

\FloatBarrier

\section{Conclusion}

\method repurposes a frozen SD1.5 text-to-image diffusion model for
training-free conditional inpainting across three evaluated domains through a
stateful boundary--interior latent controller with a predefined release
schedule, without weight updates or learned inpainting-specific conditioning
channels.  Developed on disjoint CelebA-HQ pilots, it transfers unchanged to
AFHQ and Places2.  Matched state-only and release-only controls improve all
15 Main35 dataset--metric cells.  Descriptively, \method leads LanPaint and
PILOT on all five equal-dataset macro metrics and approaches selected
inpainting-trained references on some metrics.  Inpainting-trained systems
retain the absolute metric leads but rely on substantial inpainting-specific
offline optimization.  Although \method incurs substantial inference cost,
these results position test-time latent control as a complementary route for
repurposing a frozen generic text-to-image prior.

\FloatBarrier

{
  \small
  \bibliographystyle{ieeenat_fullname}
  \bibliography{references}
}

\clearpage
\twocolumn[
  \begin{center}
    {\LARGE\bfseries Supplementary Material}\par
    \vspace{8pt}
    {\Large\bfseries Training-Free Inpainting Across Domains}\par
    {\Large\bfseries with a Frozen Text-to-Image Diffusion Model}
  \end{center}
  \vspace{8pt}
  \begin{minipage}{\textwidth}
  This supplement follows the main paper's evidence chain.  Appendix~A gives the
  implementation details, effective controller equations, predefined release schedule,
  and effective persistent-state characterization; Appendix~B records shared evaluation
  and reproduction protocols; Appendix~C pairs matched mechanism and text-only
  protocols with five-metric and prompt-alignment evidence; Appendix~D provides
  qualitative evidence and stress-test diagnostics; and Appendix~E reports
  computational costs, external positioning, and claim boundaries.
  \end{minipage}
  \vspace{12pt}
]

\appendix
\setcounter{tocdepth}{2}
\tableofcontents
\clearpage

\section{Method Implementation Details and State Characterization}
\label{sec:supp-reproducibility}
\label{sec:supp-method-details}

This appendix gives the executable method definition behind the compact
main-paper presentation.  We separate the projection and frozen transition,
the two feedback objectives, the effective persistent-state and action
equations, the predefined release schedule, and the bounded-state property.
Shared evaluation settings and the configuration-development boundary remain
in Appendix~B.

\subsection{Leakage-Free Projection and Frozen Transition}
\label{sec:supp-projection-transition}

Let $y\in[0,1]^{3\times H\times W}$ be the observed RGB image, let
$y^{\mathrm{enc}}=2y-1$ denote its VAE-encoder-range representation, and let
$M=1$ denote the unknown region and $K=1-M$ the visible region.  For odd $h$, let
$\operatorname{Box}_h$ denote channelwise stride-one average pooling with
zero padding of width $(h-1)/2$ and the padded samples included in the
denominator.  The same operator is used in the numerator and support map, so
their ratio is the average of available visible samples.  The context fill is
\begin{align}
A^{\mathrm{ctx}}&=\operatorname{Box}_{65}(K),\nonumber\\
F^{\mathrm{ctx}}&=\mathbf 1[A^{\mathrm{ctx}}>10^{-5}]\odot
\frac{\operatorname{Box}_{65}(K\odot y^{\mathrm{enc}})}
{\max(A^{\mathrm{ctx}},10^{-5})},\nonumber\\
y^{\mathrm{ctx}}&=K\odot y^{\mathrm{enc}}+M\odot F^{\mathrm{ctx}},\nonumber\\
z_0^{\mathrm{known}}&=s_{\mathrm{VAE}}\,
\mathbb E[q_{\mathrm{VAE}}(z\mid y^{\mathrm{ctx}})].
\label{eq:supp-context-filled-image}
\end{align}
Locations without visible support receive zero, and the posterior mean is used
rather than a sampled latent.  Because every occurrence of the image inside
the fill is multiplied by $K$, hidden RGB values do not enter
$z_0^{\mathrm{known}}$.

The latent mask is
$M_z=\mathbf 1[\operatorname{Resize}_{\mathrm{nearest}}(M)>0.5]$, so $M_z=1$
retains the unknown region.  One fixed noise tensor
$\epsilon^{\mathrm{known}}$ defines the complete known-region trajectory:
\begin{align}
q_t^{\mathrm{known}}
  &=\sqrt{\bar\alpha_t}\,z_0^{\mathrm{known}}
    +\sqrt{1-\bar\alpha_t}\,\epsilon^{\mathrm{known}},
\label{eq:supp-known-trajectory}\\
\Pi_t(z)&=(1-M_z)\odot q_t^{\mathrm{known}}+M_z\odot z.
\label{eq:supp-known-projection}
\end{align}
Thus visible coordinates follow one fixed forward-noise trajectory rather than
receiving fresh noise at each step.  Projection is applied before the frozen
U-Net transition and again after controller actuation.

Let $c$ be the fixed user-provided text condition, $c_\emptyset$ the empty
condition, and $w_{\mathrm{cfg}}$ the classifier-free-guidance scale.  At step
$k$, the frozen transition computes
\begin{align}
\widehat\epsilon^k
&=\epsilon_\theta(\widetilde z^k,t_k,c_\emptyset)
+w_{\mathrm{cfg}}\!\left[
\epsilon_\theta(\widetilde z^k,t_k,c)
-\epsilon_\theta(\widetilde z^k,t_k,c_\emptyset)\right],\nonumber\\
\widehat z_0^k
&=\frac{\widetilde z^k-
\sqrt{1-\bar\alpha_{t_k}}\,\widehat\epsilon^k}
{\sqrt{\bar\alpha_{t_k}}},\nonumber\\
z_{\mathrm{base}}^{k-1}
&=\sqrt{\bar\alpha_{t_{k-1}}}\,\widehat z_0^k
+\sqrt{1-\bar\alpha_{t_{k-1}}}\,\widehat\epsilon^k.
\label{eq:supp-frozen-transition}
\end{align}
This is deterministic DDIM with $\eta=0$ under the fixed scheduler
convention in Appendix~B.  Image-space feedback uses the unclipped FP32 decode
$\widehat x_0^k=\tfrac12[\operatorname{VAEdec}(
\widehat z_0^k/s_{\mathrm{VAE}})+1]$; neither the U-Net nor VAE weights are
updated.  After the terminal latent projection, the submitted image is
constructed as
\begin{align}
x^{\mathrm{raw}}
&=\operatorname{Clamp}_{[0,1]}\!\left(
\tfrac12[\operatorname{VAEdec}(z^0/s_{\mathrm{VAE}})+1]\right),\nonumber\\
x^{\mathrm{out}}&=K\odot y+M\odot x^{\mathrm{raw}}.
\label{eq:supp-final-composite}
\end{align}
Thus latent projection reimposes the known-region trajectory during sampling,
whereas the final RGB composite makes every known output pixel exactly equal
to the observed input; hidden-region reference pixels are not used.

\subsection{Boundary--Interior Feedback Objectives}
\label{sec:supp-feedback-objectives}

At reverse step $k$, set $\widehat x=\widehat x_0^k$.  Let $\Omega$ be the
pixel lattice, $C_{\mathrm{rgb}}=3$, and $\varepsilon=10^{-8}$.  For a
nonnegative scalar mask $W$, define
\begin{equation}
\langle U\rangle_W=
\frac{\sum_{p\in\Omega}\sum_{c=1}^{C_{\mathrm{rgb}}}W_pU_{p,c}}
{C_{\mathrm{rgb}}\sum_{p\in\Omega}W_p+\varepsilon}.
\label{eq:supp-weighted-average}
\end{equation}
Let $\Delta=\{(1,0),(-1,0),(0,1),(0,-1)\}$, with
$\nabla_\delta U_p=U_{p+\delta}-U_p$.  For each $\delta\in\Delta$, let
$E_\delta=\{p:K_pM_{p+\delta}=1\}$ collect visible pixels adjacent to the
unknown region.  The visible-context and seam-pair terms are
\begin{align}
\mathcal L_{\mathrm{known}}
&=\langle|\widehat x-y|\rangle_K,
\label{eq:supp-loss-known}\\
\mathcal L_{\mathrm{pair}}
&=\frac{\sum_{\delta\in\Delta}\sum_{p\in E_\delta}
\|\widehat x_{p+\delta}-y_p\|_1}
{C_{\mathrm{rgb}}\sum_{\delta\in\Delta}|E_\delta|+\varepsilon}.
\label{eq:supp-loss-pair}
\end{align}

Let $\operatorname{Dilate}_r(W)$ be same-size binary dilation by a square
structuring element of Chebyshev radius $r$, with the image exterior treated
as zero.  Define $B_r^{\mathrm{in}}=M\odot\operatorname{Dilate}_r(K)$ and
$B_r^{\mathrm{out}}=K\odot\operatorname{Dilate}_r(M)$.  Let
$\operatorname{TV}(U)$ be the mean horizontal absolute difference plus the
mean vertical absolute difference over the full tensor, and let
$G_\delta=\{p:K_pK_{p+\delta}M_{p+2\delta}M_{p+3\delta}=1\}$.  The remaining
boundary terms are
\begin{align}
\mathcal L_{\mathrm{TV}}
&=\operatorname{TV}(B_8^{\mathrm{in}}\odot\widehat x),
\label{eq:supp-loss-tv}\\
\mathcal L_{\mathrm{boundary\text{-}grad}}
&=\frac{\sum_{\delta\in\Delta}\sum_{p\in G_\delta}
\|\nabla_\delta\widehat x_{p+2\delta}-\nabla_\delta y_p\|_1}
{C_{\mathrm{rgb}}\sum_{\delta\in\Delta}|G_\delta|+\varepsilon}.
\label{eq:supp-loss-boundary-gradient}
\end{align}
Equation~\eqref{eq:supp-loss-tv} is the executed \emph{support-masked image
variation term}: because TV is evaluated after zeroing pixels outside the inner
ring, it includes both differences within the ring and differences across the
ring-support edge.  It should not be interpreted as a masked-adjacency TV that
uses only pairs lying wholly inside the ring, nor does this implementation-level
term isolate a seam-specific benefit.  The RGB-gradient term compares
two generated pixels just inside the mask with two observed pixels just
outside it.  Empty edge sets contribute zero.

The interior objective uses radii $\mathcal R=\{16,32,64\}$.  For a positive
integer $u$, let $\operatorname{odd}(u)$ be the smallest odd integer no smaller
than $u$.  For $r\in\mathcal R$, define
\begin{align}
A_r&=\operatorname{Box}_{2r+1}(K),\qquad
V_r=\mathbf 1[A_r>10^{-4}],\nonumber\\
T_r^{\mathrm{ctx}}&=
\frac{\operatorname{Box}_{2r+1}(K\odot y)}{A_r+\varepsilon},
\qquad
G_r=\operatorname{Box}_{\operatorname{odd}(r+1)}(\widehat x).
\label{eq:supp-context-fields}
\end{align}
The target $T_r^{\mathrm{ctx}}$ is detached when differentiated.  A
normalized distance-to-known weight is constructed by at most 24 one-pixel
erosions: $R^{(0)}=M$,
$R^{(j+1)}=R^{(j)}\odot[1-\operatorname{Dilate}_1(1-R^{(j)})]$, and
\begin{equation}
D=M\odot\frac{1}{24}\sum_{j=0}^{23}R^{(j)}.
\label{eq:supp-interior-distance}
\end{equation}
The multiscale continuation terms are
\begin{align}
\mathcal L_{\mathrm{lowfreq}}
&=\frac{1}{|\mathcal R|}\sum_{r\in\mathcal R}
\langle|G_r-T_r^{\mathrm{ctx}}|\rangle_{B_r^{\mathrm{in}}\odot V_r},
\label{eq:supp-loss-lowfreq}\\
\mathcal L_{\mathrm{interior}}
&=\frac{1}{|\mathcal R|}\sum_{r\in\mathcal R}
\langle|G_r-T_r^{\mathrm{ctx}}|\rangle_{M\odot D\odot V_r}.
\label{eq:supp-loss-interior}
\end{align}

For a mask $W$, let $\mu_W(U)$ and $\sigma_W(U)$ be per-channel weighted
spatial moments.  The radius-32 ring term is
\begin{equation}
\begin{aligned}
\mathcal L_{\mathrm{ring}}
&=\frac{1}{C_{\mathrm{rgb}}}
\|\mu_{B_{32}^{\mathrm{in}}}(\widehat x)
-\mu_{B_{32}^{\mathrm{out}}}(y)\|_1\\
&\quad+\frac{1}{C_{\mathrm{rgb}}}
\|\sigma_{B_{32}^{\mathrm{in}}}(\widehat x)
-\sigma_{B_{32}^{\mathrm{out}}}(y)\|_1.
\end{aligned}
\label{eq:supp-loss-ring}
\end{equation}
Define the $9\!\times\!9$ high-pass fields
\begin{align}
H_g&=\widehat x-\operatorname{Box}_{9}(\widehat x),\nonumber\\
H_k&=K\odot\left[y-
\frac{\operatorname{Box}_{9}(K\odot y)}
{\max(\operatorname{Box}_{9}(K),\varepsilon)}\right],
\end{align}
and $e(H)_p=C_{\mathrm{rgb}}^{-1}\sum_c|H_{p,c}|$.  The frequency term is
\begin{equation}
\mathcal L_{\mathrm{frequency}}=
\left|
\frac{\sum_p(M\odot D)_pe(H_g)_p}{\sum_p(M\odot D)_p+\varepsilon}
-\frac{\sum_pB_{32,p}^{\mathrm{out}}e(H_k)_p}
{\sum_pB_{32,p}^{\mathrm{out}}+\varepsilon}
\right|.
\label{eq:supp-loss-frequency}
\end{equation}

The boundary and interior objectives use the component order in the main paper
with fixed weights
\begin{equation}
\boldsymbol\lambda_B=(0.50,1.00,0.05,0.20),\qquad
\boldsymbol\lambda_I=(0.20,0.15,0.02,0.05).
\label{eq:supp-feedback-weights}
\end{equation}
With $D_z=\operatorname{Resize}_{\mathrm{area}}(D)$, their effective feedback
directions are
\begin{align}
g_B^k&=\operatorname{Normalize}_2
[-M_z\odot\nabla_{\widetilde z^k}\mathcal L_B],\nonumber\\
g_I^k&=\operatorname{Normalize}_2
[-D_z\odot\nabla_{\widetilde z^k}\mathcal L_I],
\label{eq:supp-normalized-feedback}
\end{align}
where $\operatorname{Normalize}_2(v)=v/(\lVert v\rVert_2+10^{-8})$ is
applied per sample over the full latent tensor.  These directions are
differentiated at the current projected latent, while the shaped action is
applied at the post-transition latent port.  Normalization supplies fixed
cross-objective and cross-timestep calibration rather than raw-magnitude
adaptivity; raw gradient norm is neither a stopping rule nor a convergence
certificate.  Full-tensor normalization fixes the total per-sample direction
norm rather than the per-active-coordinate RMS; this is an execution convention,
not a claim of per-pixel or mask-area-invariant actuation.  Appendix~A.5
verifies the equivalence between the frozen
implementation and these effective equations.

\subsection{Effective Persistent State and Bounded Action}
\label{sec:supp-controller-analysis}
\label{sec:supp-pi-states}

Let $\operatorname{InnerBoundary}(W,d)=
W\odot\operatorname{Dilate}_d(1-W)$.  The latent inner-boundary band,
remaining deep interior, boundary-gradient weighting, and effective spatial
retention are
\begin{align}
S_z&=\operatorname{InnerBoundary}(M_z,d_B),
\label{eq:supp-boundary-mask}\\
I_z&=M_z-S_z,\qquad W_B=S_z+\omega_B I_z,
\label{eq:supp-boundary-weight}\\
\beta_B&=\mu_{B,S}S_z+\mu_{B,I}I_z.
\label{eq:supp-boundary-retention}
\end{align}
Both states are initialized to zero.  Boundary-state formation is confined to
$M_z$, whereas the interior state is driven by the area-resized depth field
$D_z$; the final combined action is masked by $M_z$ before actuation.  The
effective recurrences are
\begin{align}
\xi_B^k&=\operatorname{ClipNorm}_{\nu_B}\!\left[
\beta_B\odot\xi_B^{k+1}+(1-\beta_B)\odot(W_B\odot g_B^k)\right],
\label{eq:supp-boundary-memory}\\
\xi_I^k&=\gamma_I\xi_I^{k+1}+(1-\gamma_I)g_I^k.
\label{eq:supp-interior-memory}
\end{align}
Here $\operatorname{ClipNorm}_{\nu}(v)$ is the per-sample Euclidean
projection of the full latent tensor onto the radius-$\nu$ ball.  Spatially
varying $\beta_B$ retains boundary and interior history at different rates,
while $W_B$ attenuates current deep-interior feedback.  The interior direction
is already depth-weighted by $D_z$, so its retention is scalar.  The evaluated
state values are
\begin{equation}
\begin{aligned}
d_B&=2,\quad \omega_B=0.15,\quad \nu_B=1,\\
(\mu_{B,S},\mu_{B,I})&=(0.95,0.70),\quad \gamma_I=0.90.
\end{aligned}
\label{eq:supp-state-parameters}
\end{equation}
Both states update at every reverse step for PI and Step-PI.  They represent
EMA-like, exponentially discounted histories of normalized feedback directions.
The PI-structured label denotes the explicit current-versus-persistent-history
construction rather than textbook integration of raw stationary error; later
feedback can counteract inconsistent history.  The fixed VAE lattice makes the
elementwise recurrence well defined but does not assert timestep-invariant
direction geometry or cross-timestep gradient transport.

For $j\in\{B,I\}$, let the release coefficients
$(\alpha_{P,k}^j,\alpha_{I,k}^j,\alpha_{O,k}^j)$ act on the proportional,
integral, and fused output terms.  The complete bounded action is
\begin{equation}
\begin{aligned}
(\alpha_{P,k}^B,\alpha_{I,k}^B,\alpha_{O,k}^B)
  &=(1,\rho_{B,k},1),\\
(\alpha_{P,k}^I,\alpha_{I,k}^I,\alpha_{O,k}^I)
  &=(\rho_{P,k},\rho_{P,k}\rho_{H,k},\rho_{O,k}),\\
P_j^k&=\alpha_{P,k}^jK_P^jg_j^k,\\
I_j^k&=\operatorname{CapNorm}\!\left(
\alpha_{I,k}^jK_I^j\xi_j^k,c_j\lVert P_j^k\rVert_2\right),\\
u_j^k&=\alpha_{O,k}^j(P_j^k+I_j^k),\\
a_k&=\operatorname{Clamp}_{[-a_{\max},a_{\max}]}
\!\left[M_z\odot(u_B^k+u_I^k)\right].
\end{aligned}
\label{eq:supp-bounded-action}
\end{equation}
For each sample and full latent tensor,
\begin{equation}
\operatorname{CapNorm}(v,r)=
v\min\!\left(1,\frac{r}{\lVert v\rVert_2+10^{-8}}\right).
\label{eq:supp-capnorm}
\end{equation}
Thus the integral term is capped relative to its proportional counterpart
before the shared final elementwise clamp.  The evaluated gains and
potentially nonidentity action safeguards are
\begin{equation}
\begin{aligned}
(K_P^B,K_I^B)&=(0.08,0.40),\\
(K_P^I,K_I^I)&=(0.20,0.18),\\
c_B=c_I&=1.5,\qquad a_{\max}=0.12.
\end{aligned}
\label{eq:supp-action-parameters}
\end{equation}
State formation is schedule independent: release controls immediate actuation
but never skips either feedback gradient or either state update.

\subsection{Predefined Release Schedule}
\label{sec:supp-release-contract}
\label{sec:supp-release-schedules}

For executed steps $k\in\{N,\ldots,1\}$, let
$p_k=(N-k)/(N-1)\in[0,1]$ be normalized reverse-step progress and
$s(x)=x^2(3-2x)$ the cubic smoothstep between knots.  Step-PI uses one
predefined release schedule parameterized by four fields.  Boundary-memory and final-interior output use the
scheduler coefficient,
\begin{align}
r_B(t_k)&=r_B^{\min}+(r_B^{\max}-r_B^{\min})\bar\alpha_{t_k},\nonumber\\
r_I(t_k)&=\max(\bar\alpha_{t_k},r_I^{\min}),
\label{eq:supp-alpha-release}
\end{align}
whereas common interior release uses
\begin{equation}
q_k=\begin{cases}
q_0,&p_k\le a_1,\\
q_0+(q_m-q_0)s\!\left(\frac{p_k-a_1}{a_2-a_1}\right),&a_1<p_k<a_2,\\
q_m,&a_2\le p_k\le a_3,\\
q_m-(q_m-q_f)s\!\left(\frac{p_k-a_3}{a_4-a_3}\right),&a_3<p_k<a_4,\\
q_f,&p_k\ge a_4,
\end{cases}
\label{eq:supp-interior-midband}
\end{equation}
and the additional interior-memory release is
\begin{equation}
h_k=\begin{cases}
1,&p_k\le b_1,\\
1-s\!\left(\frac{p_k-b_1}{b_2-b_1}\right),&b_1<p_k<b_2,\\
0,&p_k\ge b_2.
\end{cases}
\label{eq:supp-interior-i-release}
\end{equation}
The predefined release constants are
\begin{equation}
\begin{aligned}
(r_B^{\min},r_B^{\max},r_I^{\min})&=(0.50,1.00,0.05),\\
(q_0,q_m,q_f)&=(0.25,1.00,0.15),\\
(a_1,a_2,a_3,a_4)&=(0.10,0.30,0.70,0.90),\\
(b_1,b_2)&=(0.55,0.80).
\end{aligned}
\label{eq:supp-release-parameters}
\end{equation}
The two parameterizations serve distinct implementation roles:
$\bar\alpha_{t_k}$ is the scheduler's cumulative signal coefficient and
monotonically tracks denoising progress on the fixed route, while $p_k$
fixes phase locations along the executed 50-step trajectory.  Both are
deterministic monotone coordinates of that same trajectory, not independent
state observations; robustness to a different
scheduler or step count was not evaluated.  The four fields are hand specified
and frozen, not learned or inferred from realized posterior uncertainty.

The release vector is
\begin{equation}
\rho_k^{\mathrm{PI}}=(1,1,1,1),\qquad
\rho_k^{\mathrm{Step\text{-}PI}}=(r_B(t_k),q_k,h_k,r_I(t_k)).
\label{eq:supp-release-vector}
\end{equation}
PI and Step-PI share the model, scheduler, inputs, randomness, projection,
objectives, both gradients, gains, effective state, norm caps, final clamp, and
action placement; only these four release fields differ.  In the interior
branch, $q_k$ scales both $P_I^k$ and the pre-CapNorm integral input, while its
CapNorm radius scales through $\lVert P_I^k\rVert_2$; $r_I(t_k)$ then scales
their fused output.  Thus, up to the $10^{-8}$ stabilizer and downstream
safeguards, the effective interior output scale depends approximately on
$q_kr_I(t_k)$.  The four fields record the executed configuration but are not
four independent effective degrees of freedom.  Because release is applied
before the shared $\operatorname{CapNorm}$ and final clamp, a field change need
not induce a linear final-action change.  The main comparisons test the complete
predefined schedule, and Appendix~C's one-field-uniformization audit probes
conditional sensitivity by setting only the selected field to one while the
other three retain their predefined trajectories.  For the selected field,
this gives uniform release; it neither disables a channel nor gates
computation.  Neither comparison establishes minimality, global
schedule-shape optimality, or dominance over every lower-dimensional
alternative.
The evaluated PI-to-Step-PI contrast identifies the effect of the complete
predefined schedule as implemented; it does not separately identify temporal
variation from the associated overall actuation profile or from a gain-matched
constant-release alternative.

\subsection{Effective Persistent-State Recurrence}
\label{sec:supp-persistent-feedback-analysis}

This subsection proves the main-paper property for the effective recurrence
and then verifies why three safeguards retained in the implementation reduce
to the identity under the evaluated configuration.  Put
\begin{equation*}
(m_B,w_B)=(\beta_B,W_B),\qquad (m_I,w_I)=(\gamma_I,1),
\end{equation*}
let $d_j^k=w_j\odot g_j^k$, and define
\begin{equation*}
\mathcal C_B=\{\xi:\lVert\xi\rVert_2\le\nu_B\},\qquad
\mathcal C_I=\mathbb R^{4\times64\times64}.
\end{equation*}
Write $\mathcal P_j$ for Euclidean projection onto $\mathcal C_j$; hence
$\mathcal P_B=\operatorname{ClipNorm}_{\nu_B}$ and
$\mathcal P_I=\operatorname{Id}$.  For
\begin{equation*}
\zeta_j^k=m_j\odot\xi_j^{k+1}+(1-m_j)\odot d_j^k,
\end{equation*}
the effective update is $\xi_j^k=\mathcal P_j(\zeta_j^k)$.

\begin{proof}[Proof of the effective-state proposition]
Consider
\begin{align*}
Q_j^k(\xi)={}&
\frac{1}{2}\lVert\sqrt{m_j}\odot(\xi-\xi_j^{k+1})\rVert_2^2\\
&+\frac{1}{2}\lVert\sqrt{1-m_j}\odot(\xi-d_j^k)\rVert_2^2.
\end{align*}
The elementwise weights sum to one, so
$\nabla Q_j^k(\xi)=\xi-\zeta_j^k$ and the unconstrained minimizer is
$\zeta_j^k$.
The unique minimizer over $\mathcal C_j$ is therefore its Euclidean
projection $\mathcal P_j(\zeta_j^k)=\xi_j^k$.

For the boundary state, $W_B\in[0,1]$ and normalized feedback gives
$\lVert d_B^k\rVert_2\le1=\nu_B$, so $d_B^k\in\mathcal C_B$; the interior
set is unconstrained and contains $d_I^k$.  Nonexpansiveness of Euclidean
projection, with the identity as the interior special case, gives
\begin{align*}
\lVert\xi_j^k-d_j^k\rVert_2
&=\lVert\mathcal P_j(\zeta_j^k)-\mathcal P_j(d_j^k)\rVert_2\\
&\le\lVert \zeta_j^k-d_j^k\rVert_2\\
&=\lVert m_j\odot(\xi_j^{k+1}-d_j^k)\rVert_2\\
&\le\lVert m_j\rVert_\infty
\lVert\xi_j^{k+1}-d_j^k\rVert_2.
\end{align*}
\end{proof}

The result characterizes a per-step bounded compromise toward the current
weighted direction.  Because $d_j^k$ changes across reverse steps, it does not
imply trajectory-level contraction, stability, task benefit, release-schedule
effectiveness, or descent of the evaluated Step-PI trajectory.

\paragraph{Equivalence Between the Implementation and Effective Recurrence.}
Under the evaluated configuration, three retained safeguards do not alter the
effective recurrence.  Normalized feedback has $\ell_2$ norm below one, so its
subsequent elementwise clamp to $[-1,1]$ is the identity.  Boundary state and
weighted feedback have zero support in the known region, which remains zero at
every step.  Finally, the interior update is a convex combination of vectors in
the unit $\ell_2$ ball, initialized at zero, so its radius-one norm guard is also
the identity.  These are algebraic consequences of the stated configuration,
not empirical activation-rate claims.  The boundary norm projection,
integral-to-proportional $\operatorname{CapNorm}$, and final elementwise action
clamp remain potentially nonidentity safeguards.

\section{Shared Evaluation and Reproduction Protocol}
\label{sec:supp-evaluation-protocols}

This section records the evaluation choices shared across studies separately
from the method specification in Appendix~A.  It defines the fixed inputs and
internal inference stack, metric estimands and statistical scope, controller
development and transfer boundary, and audited native routes for selected
external methods, with full execution details for the two actively reproduced
training-free baselines.  Study-specific protocols remain adjacent to their
results in Appendix~C.

\paragraph{Reproduction package.}
All reported experiments use fixed configurations, case lists, random seeds,
model versions, and documented evaluation procedures.  Upon publication, we
will release the Step-PI implementation, experiment configurations, case
lists, evaluation scripts, and table/figure generation code.  Public datasets
and third-party checkpoints will be linked through their official sources and
licenses.  Reproduction refers to rerunning the specified experimental
protocols and statistical analyses; independent GPU executions are not
required to produce bitwise-identical image files.

\subsection{Frozen Inputs, Text Conditions, and Internal Inference Stack}
\label{sec:supp-frozen-inputs}

All internal training-free comparisons consume fixed $512\!\times\!512$ RGB
images.  CelebA-HQ uses its native 512-pixel images, while the AFHQ and Places2
case lists identify the 512-pixel inputs actually consumed by inference.
Masks use $M=1$ (or 255) for the unknown region and $M=0$ for the known region;
any size alignment uses nearest-neighbor interpolation, with values strictly
greater than 127 treated as unknown.  Text conditions are fixed before
evaluation.  AFHQ uses \emph{a realistic photo of a cat} for cat cases and
\emph{a realistic photo of a dog} for dog cases; CelebA-HQ uses \emph{a
realistic portrait photo of a person} for every case.  Places2 uses per-case
conditions derived from the corresponding frozen Places365 class label, for
example \emph{a realistic indoor photo of a jacuzzi}, \emph{a realistic indoor
photo of a basketball court}, and \emph{a realistic photo of an auto factory}.
These deliberately simple class-level conditions avoid output-based prompt
engineering.  They support only the paper's claim of prompt-presence
responsiveness of the frozen end-to-end pipeline; they do not evaluate
fine-grained masked-content control, instruction following, or paraphrase
robustness.  Appendix~C separately tests that responsiveness under a matched
Correct-versus-Empty intervention.

Table~\ref{tab:supp-main35-registry-summary} summarizes the Main35 evaluation
cohort consumed by all internal comparisons.  Every protocol contains 100
image--mask cases.  The case list fixes the exact source, mask, case identity,
and $512\!\times\!512$ inference input; the summary below does not infer any
additional raw-image preprocessing beyond those fixed inputs.  The exact case
identities and deterministic subset definitions will accompany the released
case lists.

\begin{table*}[t]
  \centering
  \scriptsize
  \setlength{\tabcolsep}{0.45em}
  \renewcommand{\arraystretch}{1.08}
  \begin{tabularx}{\textwidth}{l p{0.25\textwidth} c p{0.31\textwidth} c}
    \toprule
    Dataset & Frozen evaluation source & Protocols / cases & Protocol families & Protocol-mean mask area \\
    \midrule
    AFHQ & AFHQ-v2 validation split & 13 / 1,300 & Geometric, positional, and irregular-mask protocols & 9.7--40.1\% \\
    CelebA-HQ & CelebA-HQ native 512-pixel validation inputs & 14 / 1,400 & Geometric, free-form, semantic-irregular, and face-adaptive protocols & 2.6--37.2\% \\
    Places2 & Fixed 800-image subset of the Places365 standard validation split & 8 / 800 & Geometric and free-form protocols & 6.2--37.5\% \\
    \bottomrule
  \end{tabularx}
  \caption{Compact Main35 evaluation-cohort summary.  Mask-area ranges are the minimum
  and maximum protocol means within each dataset, not per-case extrema.  All
  35 protocols are formal-evaluation-only and contribute equally within their
  dataset.}
  \label{tab:supp-main35-registry-summary}
\end{table*}

DDIM-Proj, P-Guidance, PI, and Step-PI use the same frozen Stable Diffusion
v1.5 revision \texttt{451f4fe1}, with the original VAE,
$512\!\times\!512$ inputs, and $4\!\times\!64\!\times\!64$ latents.  The diffusion
pipeline uses FP16 parameters, while feedback VAE decoding and feedback-objective calculations
execute in FP32; no LoRA or replacement VAE is loaded.  The VAE scaling factor
is $0.18215$.

The runtime scheduler is DDIM with 1,000 training states,
$(\beta_{\min},\beta_{\max})=(0.00085,0.012)$, scaled-linear betas,
epsilon prediction, leading timestep spacing, \texttt{steps\_offset=1},
\texttt{set\_alpha\_to\_one=false}, and clipping and thresholding disabled.
The evaluated route uses 50 descending states $(981,961,\ldots,21,1)$, $\eta=0$,
initialization scale 1, CFG 7.5, and an empty negative prompt.  A joint per-case
generator draws unknown-region initial noise first and fixed known-region noise
second.  Within each case, the four internal methods share the image, mask,
text condition, sample seed, both noise tensors, and all frozen transition
settings.  Appendix~C states the exact field identities for the two adjacent
field-identical comparisons.
Here, deterministic DDIM denotes the $\eta=0$ transition conditional on fixed
inputs and random draws; it does not imply bitwise-identical GPU arithmetic
across independent executions.

\subsection{Metrics, Estimands, and Conditional Statistical Scope}
\label{sec:supp-main35-relative-gains}

\paragraph{Metric geometry and direction.}
Masked L1 is mean RGB error over the unknown mask.  Composite-based Masked
LPIPS uses LPIPS-Alex \citep{zhang2018unreasonable} (package version 0.1.4)
on the full-frame prediction after observed known-region pixels are copied
exactly into the evaluated composite; RGB uint8 inputs are mapped to $[-1,1]$.
It is computed on the full-frame composite rather than by spatially
mask-weighting LPIPS feature maps.  Here, ``Masked'' refers to eliminating
known-region pixel errors through compositing, not to masking intermediate
LPIPS features; receptive fields may therefore include the unknown-region
boundary neighborhood.  Boundary L1 uses a
Chebyshev-radius-2 dilation/erosion XOR band and probes the narrow pixel seam.
Boundary LPIPS uses the same LPIPS-Alex implementation on an inner
unknown-region ring of radius 12 within a 32-pixel-padded crop, probing a wider
perceptual neighborhood.  The two boundary metrics are complementary probes
rather than interchangeable estimates of one geometry.

CLIP-Q is our CLIP-IQA-style frozen two-prompt quality proxy
\citep{radford2021clip,wang2023exploringclip}.  It uses
\nolinkurl{openai/clip-vit-base-patch32} at revision \texttt{3d74acf9}
on the unknown-mask bounding-box crop with 32-pixel padding and Gaussian-blur
radius 0.2.  We apply a two-way softmax to the logits for
\emph{a high quality natural realistic photo} and \emph{a low quality
unnatural distorted image}, and report the positive-prompt probability.  Lower
is better for the four error metrics and higher is better for this proxy.
Together the five metrics cover reconstruction, local seam behavior, and a
no-reference quality proxy; they do not establish human preference or
fine-grained semantic compliance.

\paragraph{Aggregate relative gains.}
For an ordered main-paper Table~1(b) comparison
$(w,c)\in\{(\mathrm{PI},\mathrm{P\text{-}Guidance}),
(\mathrm{Step\text{-}PI},\mathrm{PI})\}$, the target method $w$ is listed
first and $c$ is its comparator.  For dataset $d$, the gain for a
lower-is-better error metric $e$ is
\begin{equation}
g^{(e)}_{d,w,c}
=100\frac{e_{d,c}-e_{d,w}}{e_{d,c}},
\label{eq:supp-main35-error-gain}
\end{equation}
whereas for the higher-is-better CLIP-Q score $s$ it is
\begin{equation}
g^{(s)}_{d,w,c}
=100\frac{s_{d,w}-s_{d,c}}{s_{d,c}}.
\label{eq:supp-main35-score-gain}
\end{equation}
For metric $m$, the summary weights AFHQ, CelebA-HQ, and Places2 equally:
\begin{equation}
\bar g_{w,c,m}=\frac{1}{3}
\sum_{d\in\{\mathrm{AFHQ},\,\mathrm{CelebA\text{-}HQ},\,\mathrm{Places2}\}}
g_{d,w,c,m}.
\label{eq:supp-main35-equal-dataset-gain}
\end{equation}
Positive values favor the first-listed method $w$.  The main-paper Table~1(b)
reports these relative means and favorable dataset--metric cells for both
adjacent comparisons; its absolute panel retains all four internal methods and
every per-dataset value.

\paragraph{Paired raw effects.}
The paired intervals and win rates use native metric differences rather than
the percentage gains above.  For a case $i$, metric $m$, left-hand method $a$,
and right-hand method $b$, define
\begin{equation}
\Delta_{i,m}^{a\rightarrow b}=
\begin{cases}
v_{i,m}^{a}-v_{i,m}^{b}, & m\text{ is lower-is-better},\\
v_{i,m}^{b}-v_{i,m}^{a}, & m\text{ is higher-is-better}.
\end{cases}
\label{eq:supp-main35-paired-raw-effect}
\end{equation}
Thus positive raw effects favor the right-hand method.  Display-only scaling,
when used, does not convert these differences into relative percentages.

\paragraph{Pairing and statistical scope.}
The paired Main35 analyses contain matched observations for all 3,500
image--mask conditions.  Appendix~C states the exact field identities of the
P-Guidance-to-PI and PI-to-Step-PI comparisons alongside their results.  Here
we define only their shared aggregation and conditional inferential scope.

Cases are averaged within protocol, protocols equally within dataset, and
datasets equally across the three evaluated domains.  Paired intervals use
10,000 case-within-protocol stratified-bootstrap draws with seed 20260719,
resampling cases within protocol before averaging the fixed protocol and
dataset means.  Some AFHQ source images recur across mask protocols, whereas
CelebA-HQ and Places2 use source-unique cases.  The bootstrap conditions on
this fixed cross-protocol reuse structure and does not additionally cluster by
source identity across protocols.  The separate MS175 cohort in Appendix~C is
source-unique within each dataset.  Each Main35 case uses one fixed inference
seed, varying across cases.  The resulting
intervals quantify case variation conditional on the three evaluated datasets,
35 fixed mask protocols, and fixed per-case seeds; they do not estimate
uncertainty over unseen domains, unseen mask families, or new inference seeds.
Appendix~C therefore reports separate five-seed MS175 audits for both adjacent
mechanism contrasts: P-Guidance-to-PI for persistent state and PI-to-Step-PI
for the complete predefined release schedule.  These repeated-initialization audits do not
estimate training-seed uncertainty or generalization to unseen datasets, mask
families, backbones, or samplers.

\subsection{Configuration Development and Transfer Scope}
\label{sec:supp-parameter-development}

All controller choices were finalized using a fixed five-case CelebA-HQ
development pilot confirmed disjoint from Main35.  Development used a mixture of
visual assessment and numerical metrics.  No Main35 formal-evaluation output
from AFHQ, CelebA-HQ, or Places2 entered configuration selection.  The single set of gains,
objective weights, retention values, bounds, and release schedules reported in
Appendix~A was then held fixed for every formal run, with no dataset-specific
retuning.  This supports configuration-locked transfer without target-domain
retuning; it does not establish hyperparameter optimality or insensitivity to
the pilot set.

Here cross-domain transfer denotes unchanged reuse from CelebA-HQ development
to the evaluated AFHQ and Places2 targets within one fixed SD1.5, original VAE,
$512\!\times\!512$, 50-step DDIM stack.  The study does not evaluate transfer
across backbones, VAEs, resolutions, samplers, or step counts.

\newpage

\subsection{External Method Attributes and Native Routes}
\label{sec:supp-external-method-contracts}

\begin{table*}[t]
\centering
\scriptsize
\setlength{\tabcolsep}{0.30em}
\begin{tabularx}{\textwidth}{l p{0.13\textwidth} c c c c X}
\toprule
Method & Evaluated foundation & \shortstack{Inpainting-\\trained weights} &
\shortstack{Learned\\inpainting interface} &
\shortstack{Dataset-specific\\weight adaptation} &
\shortstack{Test-time\\backprop} & Native sampler and interface \\
\midrule
Step-PI & Vanilla SD1.5 & No & No & No & Yes & 50-step DDIM; projected image/mask and objective feedback; generic text cross-attention \\
LanPaint & Vanilla SD1.5 & No & No & No & No & 30-step Euler/Karras plus Langevin conditioning; image, mask, and text \\
PILOT & Vanilla SD1.5 & No & No & No & Yes & 100-step DDIM with periodic latent optimization; image, mask, and text \\
SD-Inpaint & SD1.5-Inpaint & Yes & Yes & No & No & 50-step DDIM; learned masked-image/mask/text inpainting interface \\
BrushNet & SD1.5 plus BrushNet & Yes & Yes & No & No & 50-step UniPC; learned dual branch for masked-image/mask features plus text \\
PixelHacker & Places2-finetuned model and VAE & Yes & Yes & \shortstack{Yes\\(Places2)} & No & 20-step DDIM; image/mask and learned LCG embeddings; no text prompt \\
\bottomrule
\end{tabularx}
\caption{Method-attribute audit for Step-PI and the five selected external
baselines.  All evaluated weights are frozen at inference.  Inpainting-trained
weights and dataset-specific weight adaptation are distinct: SD-Inpaint and BrushNet
use inpainting-trained components without evaluation-domain retuning, whereas the
evaluated PixelHacker checkpoint is Places2-finetuned and reused unchanged on
AFHQ and CelebA-HQ.  The external rows are descriptive complete-system routes,
not strict one-switch comparisons with Step-PI.}
\label{tab:supp-external-method-attributes}
\end{table*}

\paragraph{Version pins.}
LanPaint \citep{zheng2025lanpaint} uses official release 1.5.5 through ComfyUI
v0.26.0; PILOT \citep{pan2024coherent} uses its official repository at commit
\texttt{ea869f9e}.  Both use vanilla four-channel Stable Diffusion v1.5
revision \texttt{451f4fe1}.  We evaluate the official LanPaint v1.5.5 ComfyUI
default configuration fixed before Main35.  The reported results characterize
this reproducible native route and do not rank alternative LanPaint
configurations.  PILOT's native route was likewise frozen before execution.

\begin{table*}[t]
  \centering
  \scriptsize
  \setlength{\tabcolsep}{0.45em}
  \renewcommand{\arraystretch}{1.10}
  \begin{tabularx}{\textwidth}{p{0.075\textwidth} p{0.20\textwidth} p{0.30\textwidth} X}
    \toprule
    Method & Foundation and sampler & Native test-time route & Frozen selection policy \\
    \midrule
    LanPaint & Vanilla SD1.5; 30-step Euler/Karras; CFG 5; empty negative prompt & Five inner steps; $\lambda=16$, step size 0.2, $\beta=1$, friction 15, Image-First mode, early stop 1, threshold 0, patience 1 & Official v1.5.5 ComfyUI default configuration fixed before Main35; no Main35 tuning, search, retry, or best-of-$N$ selection \\
    PILOT & Vanilla SD1.5; 100-step DDIM; CFG 7.5; $\eta=0$; FP16 parameters & Every tenth diffusion step triggers ten latent-gradient operations (100 total); $\gamma=1$, learning rates 0.007/0.025, coefficient 150, momentum 0.7; math-SDP & Frozen native route; no Main35 tuning, search, retry, or best-of-$N$ selection \\
    \bottomrule
  \end{tabularx}
  \caption{Actively reproduced training-free external routes.  These are
  complete-system native configurations, not matched controller interventions.}
  \label{tab:supp-training-free-native-routes}
\end{table*}

\paragraph{Trained comparison routes.}
The trained lane uses the official SD-Inpaint snapshot
\citep{stableDiffusionInpaintingModelCard} with 50 DDIM steps, CFG 7.5, and
FP16.  BrushNet \citep{ju2024brushnet} uses official repository commit
\texttt{0f9d9e54}, its released random-mask checkpoint, the frozen SD1.5 base,
50 UniPC steps, CFG 7.5, conditioning scale 1.0, and FP16.  PixelHacker
\citep{xu2025pixelhacker} uses official repository commit \texttt{f5567db2}, the
\texttt{ft\_places2} checkpoint and supplied VAE, 20 DDIM steps, strength
0.999, noise offset 0.0357, CFG 4.5, FP32, and no text prompt.
PixelHacker retains this fixed configuration on AFHQ and CelebA-HQ.  Full
environment pins will accompany the released code.

LanPaint and PILOT process the frozen $512\!\times\!512$, batch-one Main35 cases
with their fixed text conditions and per-case seeds.  Unknown model-input
pixels are filled with uint8 value 127; masks are nearest-neighbor resized and
rebinarized, with the required inversion for PILOT's native known-background
convention.  Unknown-region ground truth does not enter inference.  After
inference, observed known-region pixels are copied exactly into the evaluated
composite.  All 3,500 LanPaint and PILOT cases completed without inference
failure.  Exact known-region equality is therefore a shared evaluation-interface
property rather than an algorithm-specific result.

Because Step-PI, LanPaint, and PILOT retain different native inference routes
and were not assigned a common tuning budget, their results provide descriptive
complete-system positioning rather than tuning-matched or algorithm-intrinsic
superiority.  The 35-case resource audit remeasures the same frozen routes on
35 shared inputs and an A100 GPU only as descriptive full-route cost.  Complete
per-dataset quality
positioning and its exceptions are reported in Appendix~E.

\FloatBarrier

\raggedbottom

\section{Matched Mechanism, Robustness, and Text-Condition Evidence}
\label{sec:supp-extended-results}

This appendix extends the matched evidence in the main paper with complete
five-metric Main35 results, paired five-seed mechanism studies, one-field
release-schedule interventions, and a matched prompt-presence test.  All
analyses use frozen inputs and the common evaluation protocol specified in
Appendix~B.  Appendix~E records the corresponding computational cost and
reproducibility scope.

\subsection{Matched Main35 State and Joint-Release Effects}
\label{sec:supp-main35-matched-results}

Metric definitions, direction normalization, native pairing, and the
case-within-protocol stratified-bootstrap contract appear in
Sec.~\ref{sec:supp-main35-relative-gains}.  In the field-identical
P-Guidance-to-PI comparison, both methods share boundary and interior
objectives, current-step gradients, gains, normalization, safeguards, action
construction and placement, and uniform release; only PI carries discounted
state across reverse steps.  PI and Step-PI then share that complete persistent
controller and differ only in the four fields of the predefined release
schedule.  The two comparisons therefore isolate persistent state and the
complete predefined release schedule, respectively, over all 3,500 native
Main35 pairs.

\begin{table}[H]
\centering
\scriptsize
\setlength{\tabcolsep}{3.5pt}
\textbf{(a) Equal-dataset effects and pooled counts.}\par\smallskip
\begin{tabular}{lccr}
\toprule
Metric & Mean ($\times 100$) & 95\% CI ($\times 100$) & W/T/L \\
\midrule
Masked L1 & +0.385 & [+0.336, +0.434] & 2143/0/1357 \\
Boundary L1 & +0.144 & [+0.131, +0.158] & 2363/0/1137 \\
Masked LPIPS & +0.107 & [+0.089, +0.124] & 2135/0/1365 \\
Boundary LPIPS & +0.311 & [+0.278, +0.345] & 2332/0/1168 \\
CLIP-Q & +0.523 & [+0.196, +0.850] & 1996/0/1504 \\
\bottomrule
\end{tabular}
\par\medskip
\textbf{(b) Within-dataset PI win rates (\%).}\par\smallskip
\begin{tabular}{lrrr}
\toprule
Metric & AFHQ & CelebA-HQ & Places2 \\
\midrule
Masked L1 & 59.23 & 70.14 & 48.88 \\
Boundary L1 & 55.77 & 85.71 & 54.75 \\
Masked LPIPS & 56.15 & 70.29 & 52.62 \\
Boundary LPIPS & 59.77 & 78.64 & 56.75 \\
CLIP-Q & 56.54 & 57.71 & 56.62 \\
\bottomrule
\end{tabular}
\caption{Direction-normalized paired effects of enabling cross-step persistent state (PI versus P-Guidance) on Main35. Means and 95\% case-within-protocol stratified-bootstrap intervals resample cases within protocol, weight protocols equally within dataset, and weight datasets equally. Values are multiplied by 100 for readability; positive values favor PI. Wins/ties/losses are descriptive pooled counts over the same 3,500 pairs; panel (b) reports the complete descriptive dataset win rates.}
\label{tab:supp-main35-persistent-state-paired}
\end{table}

All five direction-normalized mean effects favor PI, and all five 95\%
stratified-bootstrap intervals exclude zero.  The pooled per-case win rates
range from 57.03\% to 67.51\%, showing broad but not case-universal
improvement.  Together, the mean effects, intervals, and paired win rates
support a positive aggregate persistent-state effect across Main35.

\begin{table}[t]
\centering
\small
\begin{tabular}{lcc}
\toprule
Metric & Mean ($\times 100$) & 95\% CI ($\times 100$) \\
\midrule
Masked L1 & +1.253 & [+1.165, +1.339] \\
Boundary L1 & +0.940 & [+0.918, +0.964] \\
Masked LPIPS & +0.550 & [+0.519, +0.581] \\
Boundary LPIPS & +1.483 & [+1.423, +1.543] \\
CLIP-Q & +2.436 & [+1.982, +2.893] \\
\bottomrule
\end{tabular}
\caption{Direction-normalized equal-dataset paired effects of Step-PI versus PI on Main35. The 95\% case-within-protocol stratified-bootstrap intervals resample cases within fixed protocols. Values and interval endpoints are multiplied by 100 for readability; positive values favor Step-PI.}
\label{tab:main35-paired-step-pi-vs-pi}
\end{table}

\begin{table}[t]
\centering
\small
\begin{tabular}{llrr}
\toprule
Dataset & Metric & Pairs & Win rate \\
\midrule
AFHQ & Masked L1 & 1,300 & 66.23\% \\
 & Boundary L1 & 1,300 & 99.23\% \\
 & Masked LPIPS & 1,300 & 79.08\% \\
 & Boundary LPIPS & 1,300 & 89.08\% \\
 & CLIP-Q & 1,300 & 51.00\% \\
\midrule
CelebA-HQ & Masked L1 & 1,400 & 77.07\% \\
 & Boundary L1 & 1,400 & 96.71\% \\
 & Masked LPIPS & 1,400 & 73.14\% \\
 & Boundary LPIPS & 1,400 & 83.07\% \\
 & CLIP-Q & 1,400 & 61.64\% \\
\midrule
Places2 & Masked L1 & 800 & 77.88\% \\
 & Boundary L1 & 800 & 98.88\% \\
 & Masked LPIPS & 800 & 81.75\% \\
 & Boundary LPIPS & 800 & 90.88\% \\
 & CLIP-Q & 800 & 52.12\% \\
\bottomrule
\end{tabular}
\caption{Within-dataset paired win rates of Step-PI over PI for all five metrics in the native paired Main35 analysis.}
\label{tab:supp-main35-all-win-rates}
\end{table}

\FloatBarrier

\subsection{Five-Seed Matched Mechanism Studies on MS175}
\label{sec:supp-multiseed-protocol}
\label{sec:supp-multiseed-robustness}

\paragraph{Shared cohort and statistics.}
The 175-case protocol-stratified random subset (MS175) is an outcome-blind
subset of Main35.
Before any method output or evaluation metric was examined, five unique-source
cases were randomly sampled within each of the 35 protocols, yielding 175
conditions (65 AFHQ, 70 CelebA-HQ, and 40 Places2).  The resulting cohort was
then fixed and reused across both studies.  Each case uses its fixed seed plus
four deterministically derived seeds.  Both studies pair image, mask,
prompt, seed, initial latent, known-region noise, sampler, objectives, gains,
safeguards, action construction, and placement within every comparison; each
contains 875 method pairs.

For each of the five metrics, the hierarchy averages seeds within case, cases
within protocol, protocol means within dataset, and then datasets equally.  It
uses 10,000 case-cluster bootstrap draws, a nested seed-resampling sensitivity
analysis, 100,000 case-cluster sign flips, and Holm correction across the five
metrics.  These studies vary inference initialization rather than training seeds.

\paragraph{Persistent-state study.}
P-Guidance and PI are field-identical except that PI carries discounted state
across reverse steps.  The evaluator pairs the retained P-Guidance and
PI outputs under identical case and seed identities.  All 875 pairs and all
five metrics pass the completeness and pairing checks.

\begin{table*}[!t]
\centering
\scriptsize
\setlength{\tabcolsep}{0.45em}
\resizebox{\textwidth}{!}{%
\begin{tabular}{l r r c c c c}
\toprule
Metric & P-Guidance & PI & $\Delta$ [95\% CI] & Holm $p$ & Case W/T/L & Endpoint W/T/L \\
\midrule
Boundary L1 $\downarrow$ & 0.042021 & 0.038400 & $+0.003621$ [0.003551, 0.003691] & $5.0\!\times\!10^{-5}$ & 175/0/0 & 814/0/61 \\
Boundary LPIPS $\downarrow$ & 0.127008 & 0.117900 & $+0.009108$ [0.008866, 0.009353] & $5.0\!\times\!10^{-5}$ & 175/0/0 & 749/0/126 \\
Masked L1 $\downarrow$ & 0.176841 & 0.165000 & $+0.011841$ [0.011584, 0.012102] & $5.0\!\times\!10^{-5}$ & 175/0/0 & 778/0/97 \\
Masked LPIPS $\downarrow$ & 0.125383 & 0.118000 & $+0.007383$ [0.007199, 0.007572] & $5.0\!\times\!10^{-5}$ & 175/0/0 & 782/0/93 \\
CLIP-Q $\uparrow$ & 0.531467 & 0.561900 & $+0.030433$ [0.029674, 0.031199] & $5.0\!\times\!10^{-5}$ & 175/0/0 & 626/0/249 \\
\bottomrule
\end{tabular}%
}
\caption{Persistent-state robustness across five stochastic inference initializations on MS175.  Method columns follow the frozen seed--case--protocol--dataset hierarchy; positive direction-normalized $\Delta$ favors PI.  All five intervals exclude zero, all five Holm-adjusted tests remain significant, and every metric has a positive effect in each seed-wise estimate.  Case W/T/L compares five-seed case means; endpoint W/T/L compares all 875 case--seed pairs.}
\label{tab:supp-multiseed-persistent-state-pooled}
\end{table*}

All five direction-normalized hierarchical effects favor PI, and all five
95\% intervals exclude zero after the paired five-seed evaluation.  Every
metric is positive in each of the five seed-wise estimates; the table also
reports win/tie/loss counts for both the five-seed case means and all 875 endpoints.  This
supports a consistent positive incremental effect of persistent state across
the five stochastic initializations within the frozen MS175 and SD1.5 scope,
rather than a claim about training seeds or untested backbones and samplers.

\paragraph{Joint-release study.}
The independent release study pairs PI and Step-PI while changing only the
complete predefined release schedule.  Its 875 pairs comprise 1,750 completed
runs with no failures, retries, or outcome-based selection.  It is a separate
five-seed study on the fixed MS175 cohort rather than a repeated-seed expansion
of Main35.  Positive effects below favor Step-PI.

\begin{table*}[!t]
\centering
\scriptsize
\setlength{\tabcolsep}{0.55em}
\begin{tabular}{l r r c r c}
\toprule
Metric & PI & Step-PI & $\Delta$ [95\% CI] & Relative & Positive seeds \\
\midrule
Boundary L1 $\downarrow$ & 0.038400 & 0.029267 & $+0.009133$ [0.008583, 0.009702] & $+23.78\%$ & 5/5 \\
Boundary LPIPS $\downarrow$ & 0.117900 & 0.103067 & $+0.014833$ [0.013234, 0.016383] & $+12.58\%$ & 5/5 \\
Masked L1 $\downarrow$ & 0.165000 & 0.152433 & $+0.012567$ [0.010238, 0.014918] & $+7.62\%$ & 5/5 \\
Masked LPIPS $\downarrow$ & 0.118000 & 0.112300 & $+0.005700$ [0.004925, 0.006499] & $+4.83\%$ & 5/5 \\
CLIP-Q $\uparrow$ & 0.561900 & 0.586767 & $+0.024867$ [0.015148, 0.034914] & $+4.43\%$ & 5/5 \\
\bottomrule
\end{tabular}
\caption{MS175 robustness across stochastic inference initialization.  Positive $\Delta$ favors Step-PI after metric-direction normalization.  Each metric has five positive seed sweeps, and every pooled interval excludes zero.  All five two-sided case-cluster randomization tests attain $p=9.9999\!\times\!10^{-6}$ at the finite 100,000-draw resolution and $p_{\mathrm{Holm}}=4.99995\!\times\!10^{-5}$.}
\label{tab:supp-multiseed-pooled}
\end{table*}

\begin{table*}[!t]
\centering
\scriptsize
\setlength{\tabcolsep}{0.7em}
\begin{tabular}{l l r r c c}
\toprule
Dataset & Metric & PI & Step-PI & $\Delta$ [95\% CI] & Positive seeds \\
\midrule
AFHQ & Boundary L1 $\downarrow$ & 0.0407 & 0.0322 & $+0.0085$ [0.0075, 0.0096] & 5/5 \\
AFHQ & Boundary LPIPS $\downarrow$ & 0.1098 & 0.0982 & $+0.0116$ [0.0088, 0.0143] & 5/5 \\
AFHQ & Masked L1 $\downarrow$ & 0.1763 & 0.1689 & $+0.0074$ [0.0028, 0.0119] & 5/5 \\
AFHQ & Masked LPIPS $\downarrow$ & 0.1444 & 0.1376 & $+0.0068$ [0.0044, 0.0091] & 5/5 \\
AFHQ & CLIP-Q $\uparrow$ & 0.6988 & 0.7115 & $+0.0127$ [0.0003, 0.0248] & 5/5 \\
\midrule
CelebA-HQ & Boundary L1 $\downarrow$ & 0.0315 & 0.0237 & $+0.0078$ [0.0069, 0.0087] & 5/5 \\
CelebA-HQ & Boundary LPIPS $\downarrow$ & 0.1087 & 0.0924 & $+0.0163$ [0.0124, 0.0202] & 5/5 \\
CelebA-HQ & Masked L1 $\downarrow$ & 0.1450 & 0.1281 & $+0.0169$ [0.0114, 0.0226] & 5/5 \\
CelebA-HQ & Masked LPIPS $\downarrow$ & 0.0729 & 0.0692 & $+0.0037$ [0.0025, 0.0049] & 5/5 \\
CelebA-HQ & CLIP-Q $\uparrow$ & 0.5221 & 0.5716 & $+0.0495$ [0.0311, 0.0683] & 5/5 \\
\midrule
Places2 & Boundary L1 $\downarrow$ & 0.0430 & 0.0319 & $+0.0111$ [0.0097, 0.0126] & 5/5 \\
Places2 & Boundary LPIPS $\downarrow$ & 0.1352 & 0.1186 & $+0.0166$ [0.0118, 0.0208] & 5/5 \\
Places2 & Masked L1 $\downarrow$ & 0.1737 & 0.1603 & $+0.0134$ [0.0085, 0.0182] & 5/5 \\
Places2 & Masked LPIPS $\downarrow$ & 0.1367 & 0.1301 & $+0.0066$ [0.0050, 0.0083] & 5/5 \\
Places2 & CLIP-Q $\uparrow$ & 0.4648 & 0.4772 & $+0.0124$ [-0.0109, 0.0381] & 5/5 \\
\bottomrule
\end{tabular}
\caption{Per-dataset MS175 effects.  Positive $\Delta$ favors Step-PI.  All 15 point estimates are positive and all are positive in 5/5 seed sweeps.  Fourteen intervals exclude zero; the sole exception is Places2 CLIP-Q.}
\label{tab:supp-multiseed-dataset}
\end{table*}

All 25 seed--metric equal-dataset point estimates favor Step-PI.  All 15
dataset--metric point estimates also favor Step-PI, and 14 of their intervals
exclude zero.  The sole exception is Places2 CLIP-Q: its positive point
estimate has an interval that crosses zero.  The result therefore supports
joint-release robustness across five stochastic initializations on the frozen
MS175 cohort, not a Main35-wide or universal per-case guarantee.
\FloatBarrier

\subsection{Five-Metric Conditional Sensitivity of the Predefined Release Schedule}
\label{sec:supp-release-loo-protocol}
\label{sec:supp-release-loo-results}

\paragraph{Protocol.}
The seed-0 study reuses the same fixed MS175 cohort and contains uniform PI,
full Step-PI, and four one-field-uniformization arms for $r_B$, $q$, $h$, and
$r_I$.  Each intervention sets only the selected field to one while the other
three retain their predefined Step-PI trajectories.  It never fixes a field
to zero: both objectives, gradients, persistent states, actions, and placements
remain active.  All 1,050 case--arm runs completed without failure, and all
non-release fields are paired.

Positive effects favor full Step-PI over the corresponding uniformized arm.
Intervals use 10,000 paired stratified-bootstrap draws over the fixed seed-0
case outputs.  The complete study covers the same five metrics used in the
main paper.

\begin{table*}[!t]
\centering
\scriptsize
\setlength{\tabcolsep}{0.9em}
\begin{tabular}{l l c c}
\toprule
Uniformized field & Metric & $\Delta$ [95\% CI] & W/T/L \\
\midrule
$r_B\!\rightarrow\!1$ & Boundary L1 $\downarrow$ & $+0.000823$ [0.000786, 0.000860] & 175/0/0 \\
 & Boundary LPIPS $\downarrow$ & $+0.002876$ [0.002748, 0.003006] & 175/0/0 \\
 & Masked L1 $\downarrow$ & $+0.002350$ [0.002263, 0.002439] & 175/0/0 \\
 & Masked LPIPS $\downarrow$ & $+0.001560$ [0.001507, 0.001616] & 175/0/0 \\
 & CLIP-Q $\uparrow$ & $+0.002273$ [0.002142, 0.002403] & 175/0/0 \\
\midrule
$q\!\rightarrow\!1$ & Boundary L1 $\downarrow$ & $+0.000470$ [0.000451, 0.000490] & 175/0/0 \\
 & Boundary LPIPS $\downarrow$ & $+0.003785$ [0.003624, 0.003948] & 175/0/0 \\
 & Masked L1 $\downarrow$ & $+0.004029$ [0.003868, 0.004191] & 175/0/0 \\
 & Masked LPIPS $\downarrow$ & $+0.001135$ [0.001102, 0.001169] & 175/0/0 \\
 & CLIP-Q $\uparrow$ & $+0.001989$ [0.001881, 0.002097] & 175/0/0 \\
\midrule
$h\!\rightarrow\!1$ & Boundary L1 $\downarrow$ & $+0.000196$ [0.000188, 0.000205] & 175/0/0 \\
 & Boundary LPIPS $\downarrow$ & $+0.001060$ [0.001017, 0.001103] & 175/0/0 \\
 & Masked L1 $\downarrow$ & $+0.001679$ [0.001613, 0.001745] & 175/0/0 \\
 & Masked LPIPS $\downarrow$ & $+0.000426$ [0.000412, 0.000439] & 175/0/0 \\
 & CLIP-Q $\uparrow$ & $+0.003978$ [0.003756, 0.004201] & 175/0/0 \\
\midrule
$r_I\!\rightarrow\!1$ & Boundary L1 $\downarrow$ & $+0.002351$ [0.002255, 0.002451] & 175/0/0 \\
 & Boundary LPIPS $\downarrow$ & $+0.013625$ [0.013003, 0.014261] & 175/0/0 \\
 & Masked L1 $\downarrow$ & $+0.015108$ [0.014492, 0.015723] & 175/0/0 \\
 & Masked LPIPS $\downarrow$ & $+0.004965$ [0.004798, 0.005139] & 175/0/0 \\
 & CLIP-Q $\uparrow$ & $+0.015344$ [0.014462, 0.016225] & 175/0/0 \\
\bottomrule
\end{tabular}
\caption{Five-metric conditional sensitivity to one-field uniformization on frozen seed-0 MS175.  Positive values favor full Step-PI over the corresponding uniformized arm while the other three fields retain their predefined trajectories.  All 20 intervals exclude zero, and the 175/0/0 entries are paired case wins/ties/losses.  These conditional contrasts are not an additive decomposition of four independent effective factors.}
\label{tab:supp-release-loo-all-metrics}
\end{table*}

Across the four one-field interventions and all five metrics, every one of the
20 mean contrasts favors full Step-PI, every case-resampling interval excludes
zero, and every contrast has 175/0/0 paired case wins/ties/losses.  Thus,
replacing any one predefined field by uniform release worsens all five
evaluated metrics while the other three fields retain their predefined
trajectories.

These are conditional interventions within the implemented four-field
schedule.  In particular, $q_k$ and $r_I(t_k)$ are serial interior scalings,
so the four interventions are not an independent additive decomposition and
do not compare the implemented schedule against alternative shared nonuniform
schedules.  We therefore interpret the results as five-metric conditional
sensitivity, not as proof of schedule minimality or global optimality.
\FloatBarrier

\subsection{Prompt-Presence Responsiveness Under a Text-Only Intervention}
\label{sec:supp-text-condition-protocol}
\label{sec:supp-text-condition-results}

\begin{table}[!b]
\centering
\scriptsize
\setlength{\tabcolsep}{0.35em}
\begin{tabularx}{\columnwidth}{@{}l r X r@{}}
\toprule
Scope & $n$ & Prompt-CLIP effect [95\% CI] & W/T/L \\
\midrule
Equal-dataset macro & 175 & $+0.01366$ [0.01081, 0.01658] & 125/0/50 \\
AFHQ & 65 & $+0.00403$ [0.00132, 0.00677] & 36/0/29 \\
CelebA-HQ & 70 & $+0.00836$ [0.00549, 0.01140] & 55/0/15 \\
Places2 & 40 & $+0.02858$ [0.02096, 0.03622] & 34/0/6 \\
\bottomrule
\end{tabularx}
\caption{Paired Prompt-CLIP Correct-minus-Empty effects on MS175 under otherwise fixed inference conditions.  Positive values favor Correct; intervals use 10,000 paired stratified-bootstrap draws and W/T/L counts paired case effects.  The aggregate weights protocols and datasets equally; per-dataset rows weight protocols equally.  Prompt-CLIP is distinct from the no-reference Main35 CLIP-Q metric.}
\label{tab:supp-text-condition-primary}
\end{table}

\paragraph{Protocol.}
The seed-0 study reuses the same protocol-stratified random MS175 cohort and
holds the complete evaluated Step-PI pipeline fixed while changing only
positive-text presence.  The paired 175-case analysis uses 350
endpoints---175 \emph{Correct} and 175 \emph{Empty}---with one matched pair per
case.  Image, mask, seed, initial latent, known-region noise, sampler,
controller, and every non-text field are paired.  Prompt mappings and the case
set were fixed before inference.  Appendix~D presents a separate six-case
strict-random qualitative display; the complete paired study here provides the
quantitative evidence.

Prompt-CLIP uses frozen \texttt{openai/clip-vit-base-patch32}
\citep{radford2021clip} on a fixed padded unknown-region crop.  For the Correct
and Empty outputs $x_i^C$ and $x_i^E$, respectively, and correct text $t_i^C$,
the presence effect is
\begin{equation}
\delta_i^{\mathrm{pres}}
  =s(x_i^C,t_i^C)-s(x_i^E,t_i^C).
\end{equation}
The aggregate weights protocols and datasets equally, and intervals use 10,000
paired stratified-bootstrap draws.  Prompt-CLIP is the paired text--output
alignment endpoint for this intervention and is distinct from the no-reference
CLIP-Q metric used in the Main35 benchmark.  Static inspection found no
unknown-region ground truth in inference; the reproduction protocol and
reproducibility scope are reported in Appendices~B and~E.

\paragraph{Results.}
The Correct condition yields significantly higher paired Prompt-CLIP alignment than the Empty condition in
the equal-dataset aggregate.  The AFHQ, CelebA-HQ, and Places2 effects are also
positive, and all four 95\% intervals exclude zero.  Under otherwise fixed
inference conditions, these matched effects show that the user-provided text
condition is functionally active rather than ignored by the frozen SD1.5 and
Step-PI pipeline.

Prompt-CLIP measures supplied-text alignment, whereas reconstruction metrics
compare against one held-out realization.  Because inpainting admits multiple
plausible completions, stronger text alignment need not reduce that distance;
we therefore claim text-condition responsiveness, not uniformly better
reference reconstruction.
\FloatBarrier

\onecolumn
\raggedbottom

\section{Qualitative Evidence and Stress-Test Diagnostics}
\label{sec:supp-qualitative-atlas}

This section provides visual counterparts to the quantitative evidence in the
main paper and Appendix~C: random inspection of the internal construction
ladder, strict-random prompt-presence examples, random native-route system
comparisons, and cross-domain stress-test diagnostics.  The matched
quantitative analyses carry the corresponding statistical and mechanism
evidence.

Every grid follows one reading convention: each row retains the same source
image and mask across methods, compared methods are adjacent, and exact case
identifiers remain visible.  Captions identify the displayed cohort, fixed
conditions, intended reading, and claim boundary.  Throughout this
appendix, opaque violet denotes the unknown region and the thin mint contour is
display-only; neither exposes hidden pixels nor changes the model input.

\subsection{Random Construction-Ladder Inspection}

The shared-foundation grid visualizes the internal construction ladder on
three cases randomly drawn from the evaluated cases, one per dataset, without
conditioning on any method output.  P-Guidance$\rightarrow$PI visually
accompanies the persistent-state comparison, whereas
PI$\rightarrow$Step-PI accompanies the predefined-release comparison.  Exact
case identifiers are shown for traceability.  These are the same three random
cases used in the main paper's native-route external
comparison.  The plate makes the successive interfaces inspectable; the
corresponding five-metric matched analyses, rather than a presumed visual
monotonic ordering, support the two conclusions.

\begin{figure}[H]
  \centering
  \includegraphics[width=\textwidth]{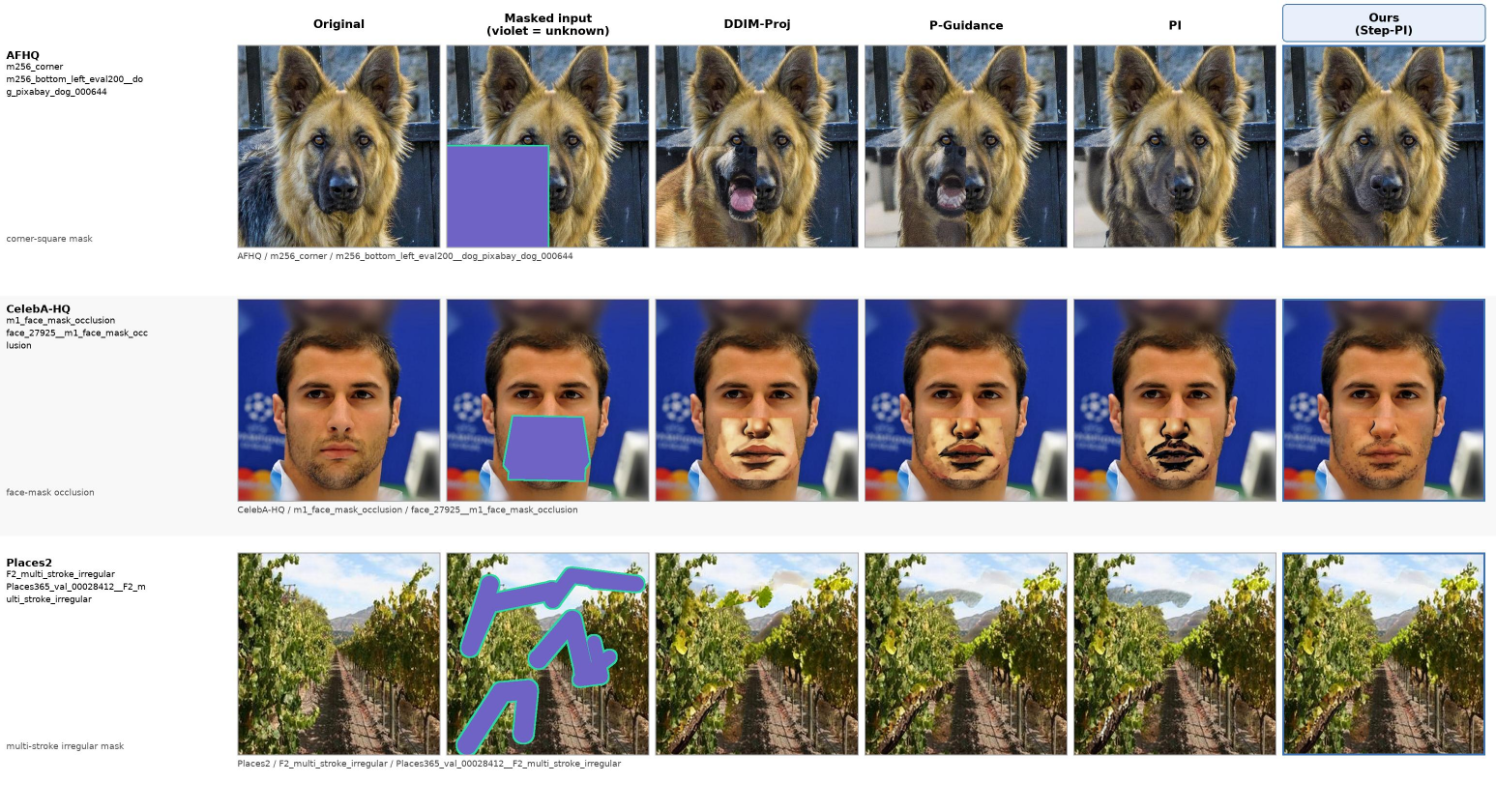}
  \caption{Random shared-foundation construction-ladder inspection on three
  cases.  The columns trace DDIM-Proj $\rightarrow$ P-Guidance $\rightarrow$
  PI $\rightarrow$ Step-PI under the same frozen backbone, and exact case IDs
  are printed per row.  These are the same three random cases shown in the
  main paper's native-route external comparison, without method-specific
  reselection.  The plate visually accompanies the five-metric matched
  comparisons.}
  \label{fig:supp-internal-inpainting-grid}
\end{figure}

\FloatBarrier

\subsection{Strict-Random Prompt-Presence Examples}
\label{sec:supp-outcome-blind-core}
\label{sec:supp-text-condition-qualitative}

The display contains six MS175 cases drawn strictly at random before Correct or
Empty outcomes were inspected, two per dataset.  Within each paired row, every
inference input and Step-PI setting is fixed; only the positive text condition
changes.  The Original column provides evaluation-only context unavailable during
inference, while the Masked column follows the shared display convention without altering model inputs.
The paired Prompt-CLIP analysis in Appendix~C provides the three-domain
quantitative evidence; this plate is its strict-random visual companion and
does not test fine-grained counterfactual control or human preference.

\begin{figure}[H]
  \centering
  \includegraphics[width=0.76\textwidth]{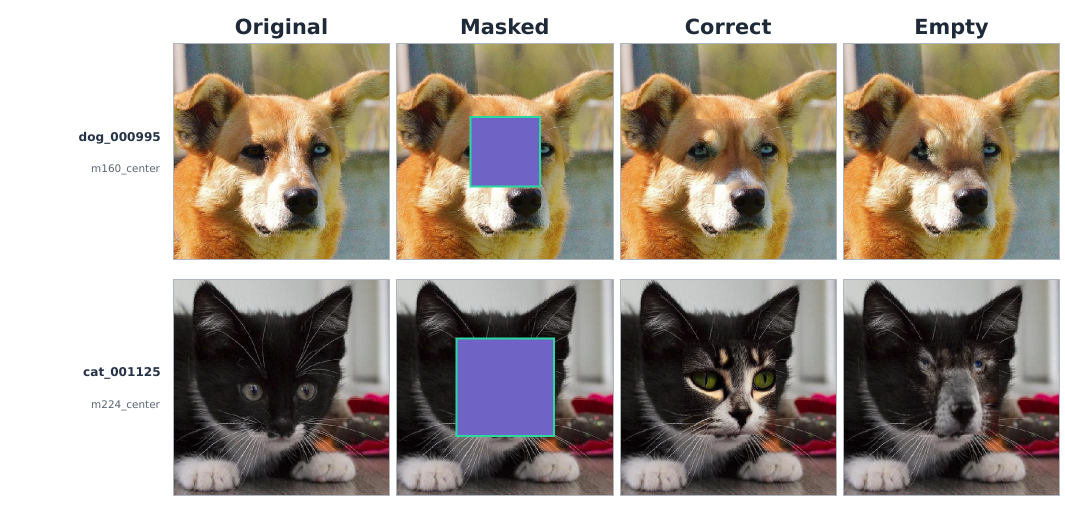}
  \par\smallskip
  \includegraphics[width=0.76\textwidth]{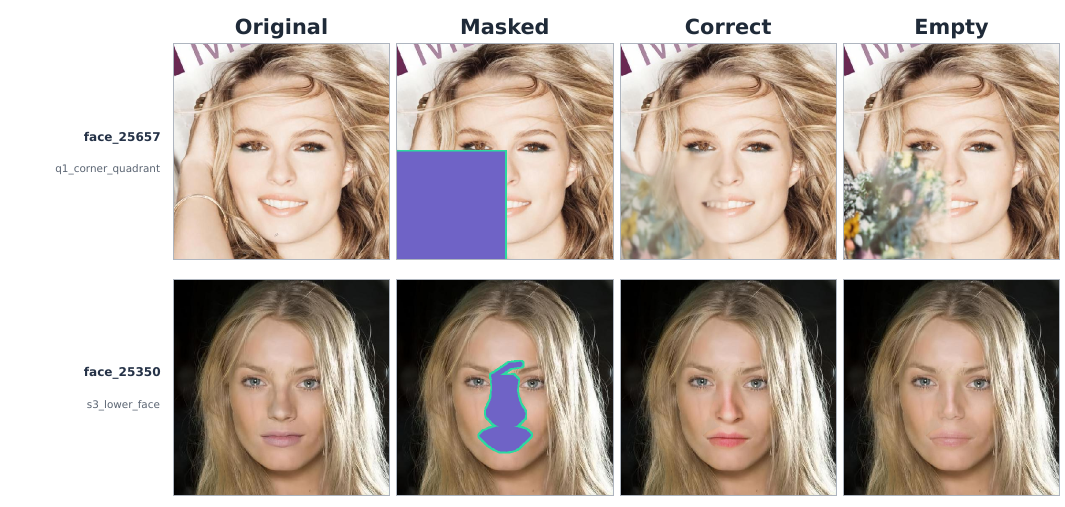}
  \par\smallskip
  \includegraphics[width=0.76\textwidth]{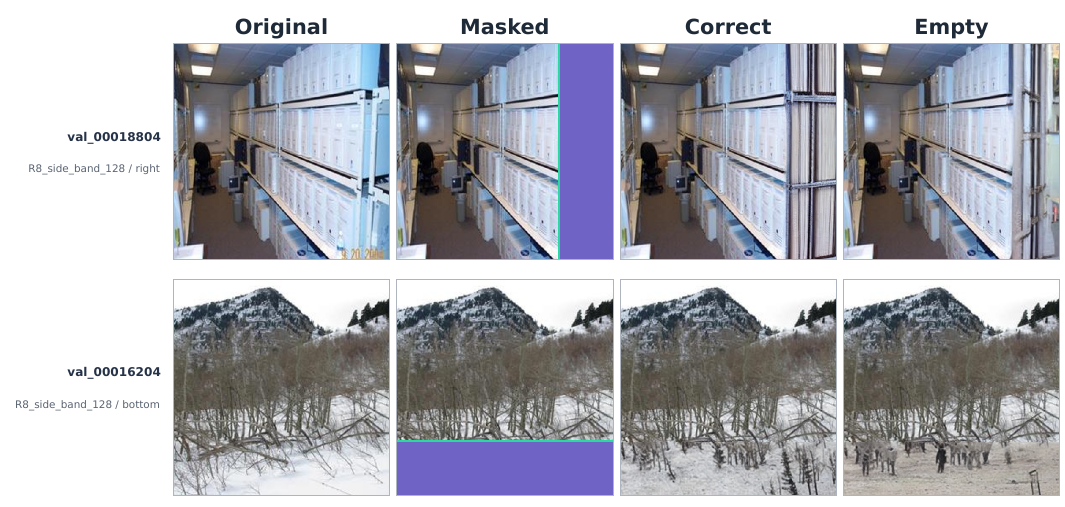}
  \caption{Prompt-presence responses on six strict-random cases: AFHQ
  (top), CelebA-HQ (middle), and Places2 (bottom), with two cases per dataset.
  Every panel uses the shared four-column order Original, Masked, Correct, and
  Empty.  Original is evaluation-only reader context and was unavailable
  during inference; Masked uses the shared violet/mint display convention; and
  only the positive text condition differs between Correct and Empty.  The
  corresponding aggregate and per-dataset estimates are reported in
  Appendix~C; this plate is their strict-random visual companion.}
  \label{fig:supp-text-condition-presence}
\end{figure}

\FloatBarrier

\subsection{Random Native-Route System Comparisons}

Each plate contains six cases randomly drawn per dataset without conditioning
on any method output, for 18 cases in total; one per dataset is reused in the
main paper's first-page teaser.  Every row fixes the source image and mask,
while methods retain their native inference routes.  The held-out Original is
shown as reference context and is not assumed to be the unique valid
completion.  These plates provide outcome-independent inspection of complete
systems.  Because their native interfaces and inference procedures differ,
they are descriptive system-level comparisons rather than intervention-matched
attribution, formal population-frequency estimates, or human-preference
evidence.

\begin{figure}[H]
  \centering
  \includegraphics[width=0.95\textwidth]{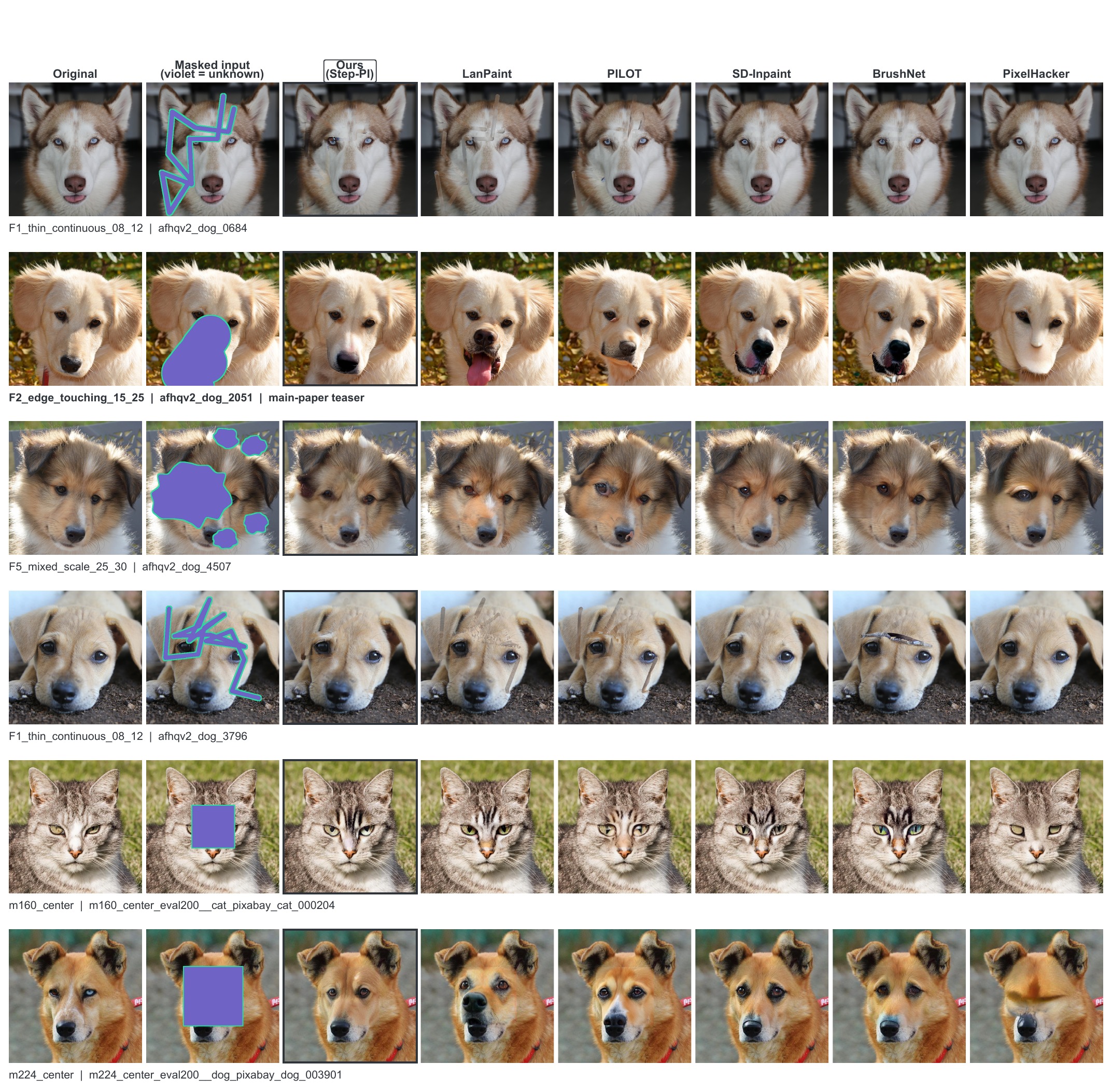}
  \caption{Native-route comparison on six randomly drawn AFHQ cases.  Each row
  shows one source image and mask across Step-PI, LanPaint, PILOT, SD-Inpaint,
  BrushNet, and PixelHacker; the case label marks the row reused in the main
  paper's first-page teaser.  Exact protocol and case IDs are printed below
  each row.  The draw is independent of all method outputs, and every method
  retains its native inference route.}
  \label{fig:supp-external-atlas-afhq}
\end{figure}

\begin{figure}[H]
  \centering
  \includegraphics[width=\textwidth]{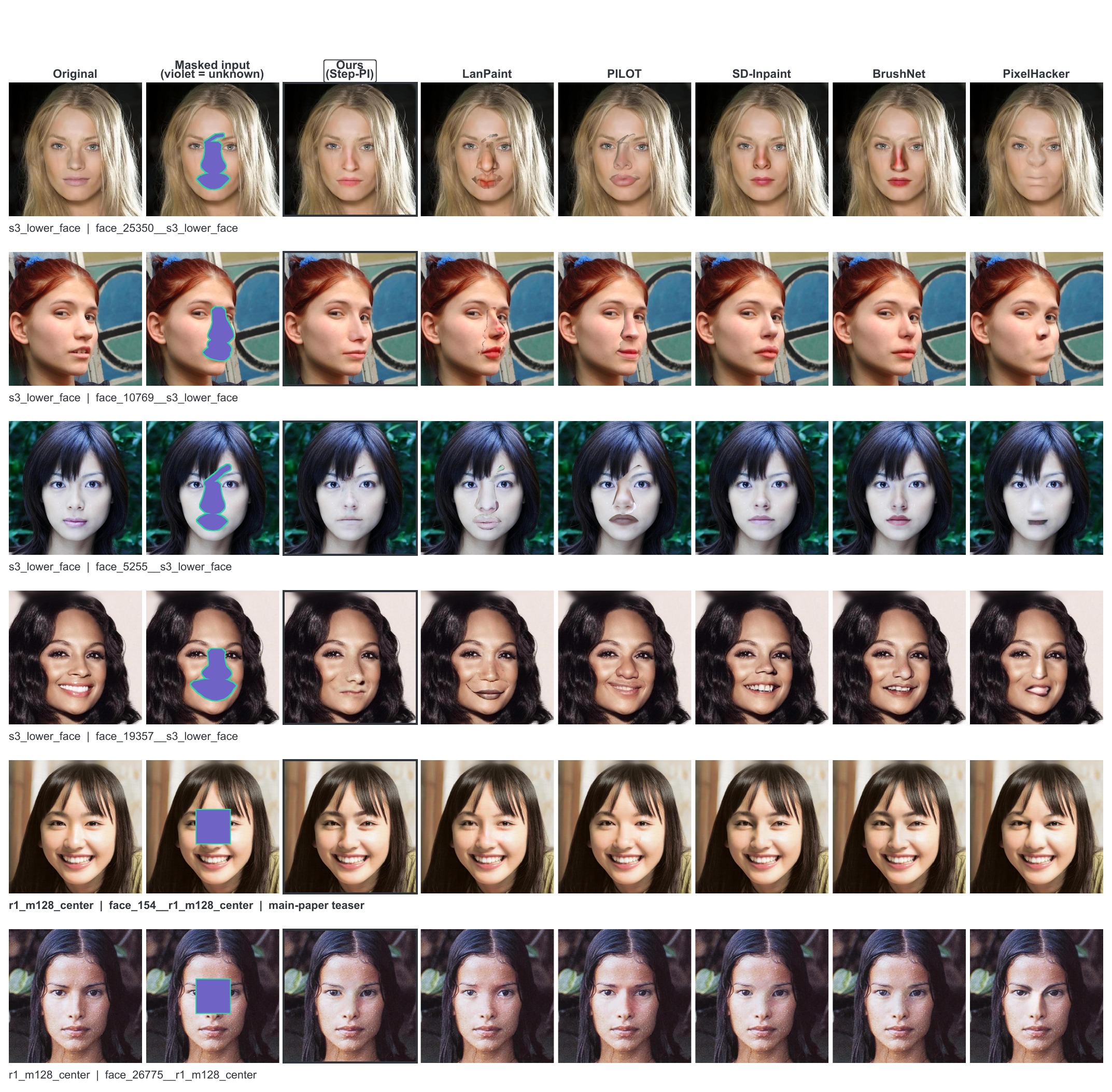}
  \caption{Native-route comparison on six randomly drawn CelebA-HQ cases,
  following the same column order and reading convention as
  Fig.~\ref{fig:supp-external-atlas-afhq}.  All panels are independently
  verified full-resolution composites on the same row-specific image and mask,
  with exact identifiers printed per row.}
  \label{fig:supp-external-atlas-celebahq}
\end{figure}

\begin{figure}[p]
  \centering
  \includegraphics[width=\textwidth]{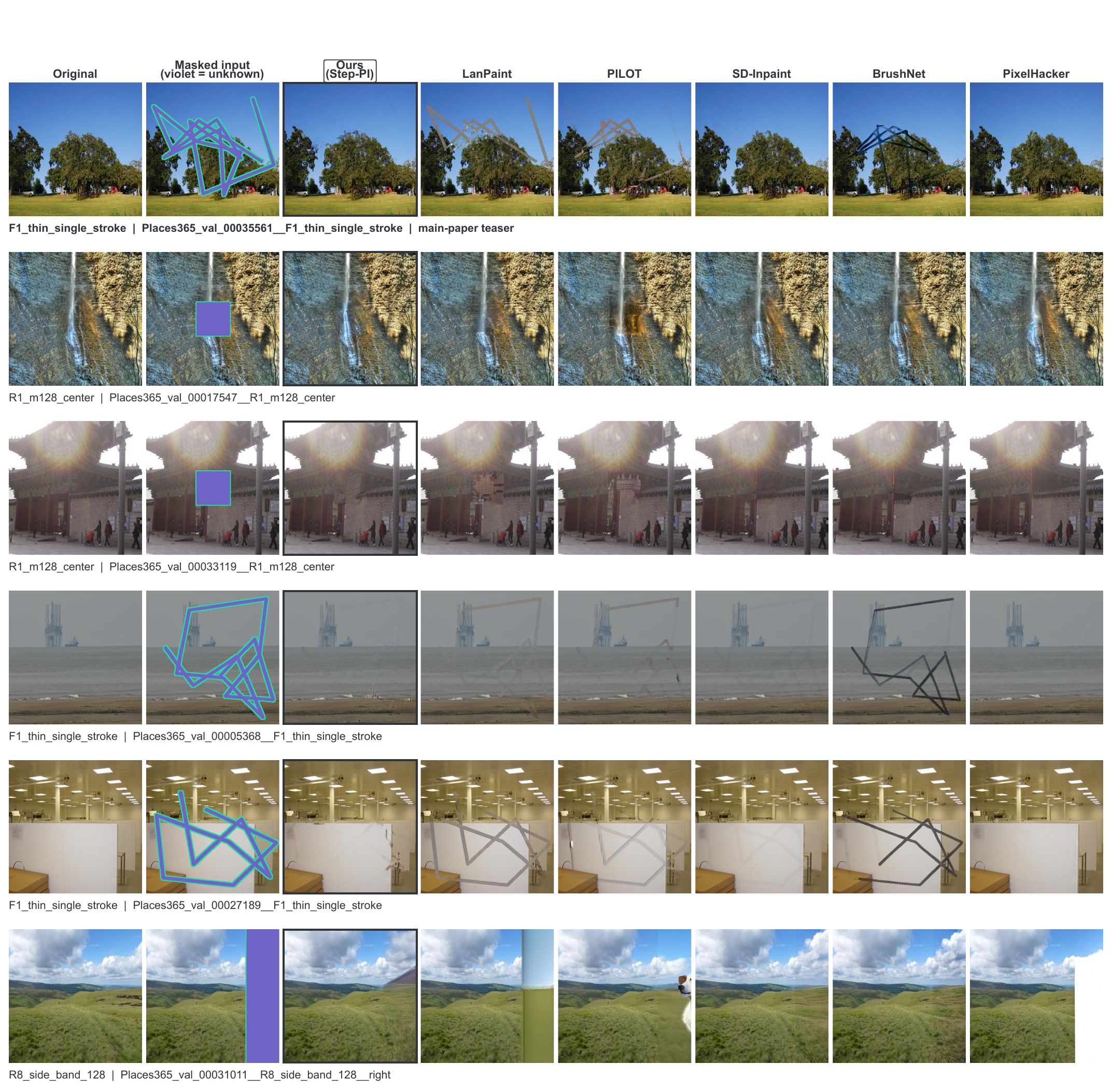}
  \caption{Native-route comparison on six randomly drawn Places2 cases.  Each
  method receives the same image and mask within a row while retaining its
  native inference route, and exact identifiers are printed per row.}
  \label{fig:supp-external-atlas-places2}
\end{figure}

\FloatBarrier

\subsection{Cross-Domain Stress-Test Diagnostics}

The final plate presents a separate fixed six-case cross-domain diagnostic set
spanning challenging mask--content configurations, with two cases per dataset.
For every case, the complete active comparison lane is retained without
method-specific inclusion or exclusion.  The plate is organized to make
diverse structural, semantic, and blending difficulties inspectable rather
than to estimate their prevalence.

Inspection focuses on structural continuation under disconnected or thin
masks, facial and semantic coherence under central occlusion, and seam and
geometry consistency over long or repeated structures.  These descriptions
direct visual inspection of the fixed cases; they are observed case attributes,
not experimentally isolated causes of any output.

\begin{figure}[H]
  \centering
  \includegraphics[width=0.86\textwidth]{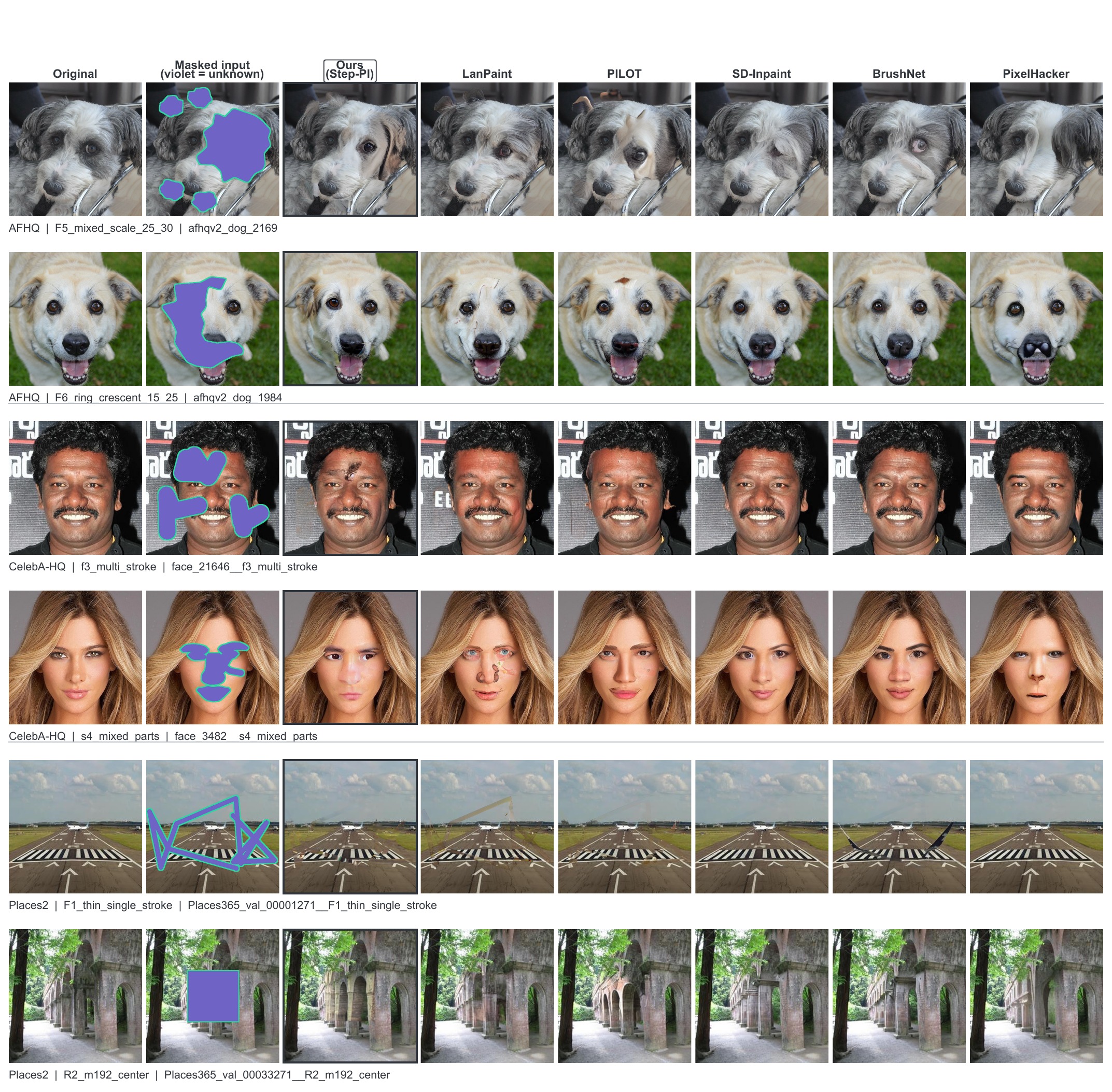}
  \caption{Cross-domain stress-test diagnostics, with two cases each from AFHQ,
  CelebA-HQ, and Places2 and the complete active comparison lane retained for
  every case.  Horizontal rules separate datasets, and exact protocol and case
  IDs are printed below every row.  The plate supports inspection of
  challenging structural, semantic, and blending behaviors rather than
  prevalence estimation or causal attribution.}
  \label{fig:supp-external-failure-atlas}
\end{figure}

\FloatBarrier

\section{Computational Cost and Evaluated Scope}
\label{sec:supp-efficiency-limitations}

\setcounter{dbltopnumber}{3}
\renewcommand{\dbltopfraction}{0.95}

\subsection{35-Case Runtime, Memory, and Structural Compute Audit}
\label{sec:supp-eff35-audit}

\paragraph{Cohort and measurement protocol.}
The resource audit fixes one outcome-independent case from each of the 35
Main35 protocols.
For nine methods, we measure batch-one, model-resident steady-state wall-clock
time on one NVIDIA A100-SXM4-80GB GPU.  Each method is executed in three
fresh-process blocks with three fixed warm-up cases per block; excluding the
warm-ups, this yields 945 formal timing tuples.  The analysis unit is the
method--case median over the three repeats.  Intervals use 10,000
dataset-stratified case/protocol bootstrap resamples (seed 20260820), with no
outlier deletion and no $p$- or $q$-values.  Peak device memory is collected at
20~Hz in a separate pass over the same 35 cases for all nine methods, yielding
315 method--case measurements; timing and memory are not joint observations.

\begin{table}[H]
  \centering
  \caption{Intervention-matched internal 35-case cost audit.  The upper panel
  reports empirical 2.5th and 97.5th percentiles of the 35 case medians;
  dataset-stratified bootstrap intervals govern the prose and
  paired comparisons.  Peak device memory is measured in a separate pass.
  The lower panel reports structural calls from a count-only trace, not from
  the timed runs.}
  \label{tab:supp-eff35-internal}
  \textbf{(a) Runtime and peak device memory.}\par\smallskip
  \resizebox{\textwidth}{!}{\begin{tabular}{lrrrrrr}
\toprule
Method & Steps & Feedback grad./step & Param. updates & Median s/img [empirical P2.5, P97.5] & Mean $\pm$ SD & Peak VRAM \\
\midrule
DDIM-Proj & 50 & 0 & 0 & 3.07 [3.06, 3.07] & 3.07 $\pm$ 0.01 & 4.18 GiB \\
P-Guidance & 50 & 2 & 0 & 30.48 [30.47, 30.49] & 30.47 $\pm$ 0.04 & 15.55 GiB \\
PI & 50 & 2 & 0 & 30.48 [30.47, 30.49] & 30.48 $\pm$ 0.06 & 15.55 GiB \\
Step-PI & 50 & 2 & 0 & 30.48 [30.47, 30.50] & 30.48 $\pm$ 0.04 & 15.55 GiB \\
\bottomrule
\end{tabular}
}
  \par\medskip
  \textbf{(b) Structural computation for the matched internal methods.}\par\smallskip
  \resizebox{0.88\textwidth}{!}{\begin{tabular}{lrrrrr}
\toprule
Method & Denoising steps & UNet calls & Autograd calls & Optimizer steps & Parameter updates \\
\midrule
DDIM-Proj & 50 & 50 & 0 & 0 & 0 \\
P-Guidance & 50 & 50 & 100 & 0 & 0 \\
PI & 50 & 50 & 100 & 0 & 0 \\
Step-PI & 50 & 50 & 100 & 0 & 0 \\
\bottomrule
\end{tabular}
}
\end{table}

\paragraph{Matched internal cost.}
The audit resolves no runtime difference between Step-PI and either matched
feedback-enabled route: the paired Step-PI-minus-P-Guidance difference is
$+0.008$~s/image $[-0.003,0.018]$, and the paired Step-PI-minus-PI difference is
$-0.002$~s/image $[-0.017,0.020]$.  P-Guidance, PI, and Step-PI share 50 UNet calls and 100 autograd
calls, with no optimizer step or parameter update.  The predefined release
schedule therefore adds no separately resolved cost beyond uniform PI in this
audit; this is not an equivalence claim.

The complete feedback route nevertheless remains computationally demanding.
Under the dataset-stratified bootstrap, Step-PI requires a median 30.48~s/image
$[30.47,30.50]$ and an independent peak-device-memory median of 15.55~GiB.
Relative to DDIM-Proj, its paired median overhead is $+27.41$~s/image
$[27.39,27.44]$, its runtime ratio is $9.94\times$ $[9.93,9.95]$, and its
peak device memory is $3.72\times$ as large.  The structural counts locate the
added computation in online differentiation: the feedback-enabled routes add
100 autograd calls to the same 50 UNet calls.  They do not attribute wall-clock
time to either objective or any individual suboperation.

\begin{table}[H]
  \centering
  \caption{Same-input, same-A100 descriptive full-route comparison on the
  35-case audit.
  All measured routes use frozen weights; online differentiation denotes
  latent or objective gradients rather than parameter updates.  Methods retain
  their benchmark-specific backbones, checkpoints, samplers, precision,
  conditioning interfaces, and inference routes.  The comparison therefore
  describes complete online cost on the same 35 cases and GPU rather than an
  intervention-matched efficiency comparison.  Peak device memory is measured
  in a separate pass.}
  \label{tab:supp-eff35-external}
  \resizebox{\textwidth}{!}{\begin{tabular}{llllrr}
\toprule
Method & Type & Native inference steps & Online differentiation & Median s/img [95\% CI] & Peak VRAM \\
\midrule
Step-PI & Training-free & 50 & 2 gradients/step & 30.48 [30.47, 30.50] & 15.55 GiB \\
LanPaint & Training-free & 30 Euler + 5 inner Langevin steps & None observed & 11.38 [11.34, 11.43] & 7.14 GiB \\
PILOT & Training-free & 100 & 100 latent gradients & 23.51 [23.45, 23.67] & 6.75 GiB \\
SD-Inpaint & Trained reference & 50 & None observed & 2.27 [2.26, 2.27] & 3.45 GiB \\
BrushNet & Trained reference & 50 & None observed & 2.94 [2.94, 2.95] & 4.97 GiB \\
PixelHacker & Trained reference & 20 & None observed & 1.33 [1.32, 1.33] & 7.46 GiB \\
\bottomrule
\end{tabular}
}
\end{table}

\paragraph{External descriptive positioning.}
The external rows are complete-route resource measurements, not causal
component estimates.  At the reported operating point, Step-PI is slower than
LanPaint and PILOT, while the trained reference routes are faster still; these
differences include each system's foundation, sampler, precision, conditioning
interface, and complete inference route.  They cannot be attributed to a
single algorithmic component.

Step-PI is therefore training-free but not computation-free.  This 35-case
audit characterizes one A100, $512\!\times\!512$, batch-one, 50-step DDIM
operating point; it supports neither cross-setting scaling, an amortized
training--inference break-even point, nor a quality--compute frontier.  The
small PI/Step-PI timing difference also cannot be interpreted as an inherent
acceleration or slowdown from the release schedule.

\FloatBarrier

\subsection{Descriptive Native-Route Quality Positioning}

The equal-dataset macro comparison in the main paper favors Step-PI over
LanPaint and PILOT on all five metrics.  Appendix~B specifies the method
attributes and frozen native routes.

\begin{table}[H]
\centering
\small
\begin{tabular}{llccccc}
\toprule
Dataset & Method & Masked L1 $\downarrow$ & Boundary L1 $\downarrow$ & Masked LPIPS $\downarrow$ & Boundary LPIPS $\downarrow$ & CLIP-Q $\uparrow$ \\
\midrule
AFHQ & Step-PI & 0.1694 & 0.0320 & 0.1379 & 0.0980 & 0.7109 \\
AFHQ & LanPaint & 0.1677 & 0.0439 & 0.1518 & 0.1279 & 0.6797 \\
AFHQ & PILOT & 0.1708 & 0.0456 & 0.1502 & 0.1224 & 0.6742 \\
AFHQ & SD-Inpaint & 0.1263 & 0.0272 & 0.1067 & 0.0656 & 0.7605 \\
AFHQ & BrushNet & 0.1396 & 0.0317 & 0.1209 & 0.0841 & 0.7542 \\
AFHQ & PixelHacker & 0.1211 & 0.0243 & 0.1201 & 0.0590 & 0.7333 \\
CelebA-HQ & Step-PI & 0.1278 & 0.0236 & 0.0690 & 0.0923 & 0.5719 \\
CelebA-HQ & LanPaint & 0.1342 & 0.0371 & 0.0771 & 0.1373 & 0.6052 \\
CelebA-HQ & PILOT & 0.1203 & 0.0371 & 0.0672 & 0.1158 & 0.5999 \\
CelebA-HQ & SD-Inpaint & 0.0893 & 0.0174 & 0.0461 & 0.0591 & 0.7136 \\
CelebA-HQ & BrushNet & 0.1052 & 0.0224 & 0.0552 & 0.0824 & 0.7008 \\
CelebA-HQ & PixelHacker & 0.0943 & 0.0156 & 0.0524 & 0.0698 & 0.6385 \\
Places2 & Step-PI & 0.1606 & 0.0317 & 0.1300 & 0.1184 & 0.4763 \\
Places2 & LanPaint & 0.1805 & 0.0502 & 0.1679 & 0.1740 & 0.3638 \\
Places2 & PILOT & 0.1879 & 0.0471 & 0.1539 & 0.1603 & 0.4310 \\
Places2 & SD-Inpaint & 0.1438 & 0.0253 & 0.1113 & 0.0951 & 0.4783 \\
Places2 & BrushNet & 0.1664 & 0.0335 & 0.1354 & 0.1366 & 0.4622 \\
Places2 & PixelHacker & 0.2218 & 0.0229 & 0.1256 & 0.0851 & 0.4919 \\
\bottomrule
\end{tabular}
\caption{Per-dataset external positioning for Step-PI and the five selected external methods, including independently audited LanPaint and PILOT overlays.}
\label{tab:external-absolute}
\end{table}

The first three rows within each dataset form the vanilla-SD1.5 training-free
lane; the remaining three rows are inpainting-trained references.  The advantage is
broad but not uniform across every dataset--metric cell:
Step-PI records better point estimates than LanPaint in 13 of 15 cells; the
exceptions are AFHQ Masked L1 and CelebA-HQ CLIP-Q.  It records better point
estimates than PILOT in 12 of 15 cells; the exceptions are CelebA-HQ Masked L1,
Masked LPIPS, and CLIP-Q.  Because methods retain different audited native
routes and compute budgets, these results provide descriptive system-level
positioning rather than an intervention-matched ranking.

\FloatBarrier

\subsection{Reproduction Scope and Numerical Execution}

\paragraph{Evaluated stack.}
The evaluated claim is configuration-locked cross-domain transfer within one
frozen vanilla-SD1.5 stack (original VAE, $512\!\times\!512$ inputs, and 50-step
DDIM).  The common controller configuration was developed only on small
CelebA-HQ pilots disjoint from Main35, then transferred without retuning to AFHQ
and Places2.  This evidence does not establish transfer across other backbones,
VAEs, samplers, resolutions, or denoising budgets.

\paragraph{Reproduction protocol.}
The reported experiments are reproducible as protocol-level reruns using the
fixed inputs, configurations, seeds, and analysis procedures described in
Appendix~B.  The persistent-state and joint-release five-seed studies are
separate paired inference-initialization studies on the fixed MS175 cohort;
neither is a repeated-seed expansion of Main35.  Static checks verify paired
image, mask, seed, latent, and noise identities, ground-truth isolation during
inference, exact known-region preservation, and resume logic.  Upon publication,
the implementation, experiment configurations, case lists, evaluation scripts,
and table/figure generation code will be released.

\paragraph{Numerical execution boundary.}
Independent CUDA executions can differ at the level of floating-point
arithmetic and are therefore not expected to produce byte-identical image
files.  Reproduction here means rerunning the specified protocol and recovering
the reported statistical conclusions, not requiring bitwise identity across
independent GPU processes.  Reported intervals characterize the case and/or
inference-initialization variation specified by each study; they do not estimate
low-level arithmetic variation across executions.  The one-field schedule
study is consequently interpreted at its intended scope---conditional
sensitivity on the evaluated cohort---rather than as a separate claim of
cross-execution numerical robustness.

\FloatBarrier

\end{document}